\tikzstyle{node}=[fill=white, draw=black, shape=circle, minimum size=1mm, ultra thick]
\tikzstyle{small box}=[fill=white, draw=black, shape=rectangle, minimum height=0.5cm, minimum width=0.5cm, ultra thick]
\tikzstyle{weyl}=[fill=white, draw={rgb,255: red,0; green,0; blue,109}, shape=rectangle, minimum height=0.5cm, minimum width=0.5cm]
\tikzstyle{filled_node}=[fill=black, draw=black, shape=circle, minimum size=1mm, ultra thick]
\tikzstyle{large box}=[fill=white, draw=black, shape=rectangle, minimum height=0.5cm, minimum width=1cm, ultra thick]
\tikzstyle{thick}=[-, ultra thick]
\tikzstyle{blue_thick}=[-, ultra thick, draw=blue]
\tikzstyle{dashes}=[-, dashed, draw={rgb,255: red,191; green,191; blue,191}, dash pattern=on 2mm off 1mm, fill={rgb,255: red,244; green,228; blue,0}]
\tikzstyle{thick_arrow}=[ultra thick, ->]
\tikzstyle{dash_1}=[-, dashed]
\tikzstyle{dash_2}=[-, dashed, fill={rgb,255: red,246; green,235; blue,255}]
\tikzstyle{dash_3}=[-, dashed, fill={rgb,255: red,229; green,255; blue,181}]
\tikzstyle{dash_4}=[-, dashed, fill={rgb,255: red,255; green,209; blue,153}]
\tikzstyle{red_thick}=[-, ultra thick, draw=red]
\tikzstyle{dash_5}=[-, dashed, fill={rgb,255: red,225; green,255; blue,254}]
\tikzstyle{jellyfish}=[-, ultra thick, fill={rgb,255: red,34; green,48; blue,255}]
\DeclareMathOperator{\Hom}{Hom}
\DeclareMathOperator{\Bell}{B}
\DeclareMathOperator{\sgn}{sgn}
\DeclareMathOperator{\Rep}{Rep}
\theoremstyle{plain}
\newtheorem{theorem}{Theorem}[section]
\theoremstyle{definition}
\theoremstyle{remark}
\newtheorem{remark}[theorem]{Remark}
\newtheorem{example}[theorem]{Example}
\newtheorem{defn}[theorem]{Definition}
\title{Categorification of Group Equivariant Neural Networks}
\author{%
  Edward Pearce--Crump\\
  Department of Computing\\
  Imperial College London\\
  London, SW7 2AZ, United Kingdom\\
  \texttt{ep1011@ic.ac.uk} \\
}
\begin{document}

\maketitle

\begin{abstract}
	We present a novel application of category theory for deep learning.
	We show how category theory can be used
	to understand and work with the linear layer functions of group equivariant neural networks whose layers are some tensor power space of $\mathbb{R}^{n}$ for the groups $S_n$, $O(n)$, $Sp(n)$, and $SO(n)$. 
	By using category theoretic constructions, we build a richer structure that is not seen in the original formulation of these neural networks, leading to new insights.	
	In particular, we outline the development of an algorithm for quickly computing the result of a vector 
	that is
	passed through an equivariant, linear layer for each group in question. 
	The success of our approach suggests that category theory could be beneficial for other areas of deep learning.
\end{abstract}

\section{Introduction}

Despite the numerous advances that have been made in many areas of deep learning, it is well known that the field is lacking a rigourous theoretical framework to support the applications that are being developed.
Practitioners typically spend a significant amount of time and effort searching for a neural network architecture that works well for the problem that they wish to solve.
The architectures are often designed using heuristics that have been shown to work well in practice, despite them being poorly understood in theory.
Notably, small modifications to the architecture can often result in a significant reduction in performance.

It has only become apparent very recently to researchers in the deep learning community that there is potential for category theory to provide a new set of tools for developing the theory of deep learning.
The hope is that category theory will provide the rigourous theoretical framework in which all existing and future results can be placed.
Category theory is based on the core concept of \textit{compositionality}; that complex systems can be built out of smaller parts, and that the entire system can be understood by studying these smaller parts.
Category theory was first used in pure mathematics in the 1940s
as a way of establishing a higher-level structure for understanding a number of algebraic objects (sets, vector spaces, topological spaces etc.) and their maps (functions, linear maps, continuous maps etc.) that shared similar characteristics.
It has since been applied successfully to many other disciplines of science, such as physics, chemistry and computer science.
Given that many deep learning architectures share similar characteristics, 
in that they are typically built out of layers and maps between these layers, it is no surprise that deep learning researchers are looking to 
category theory to achieve a similar outcome for their own field.


In this paper, we present a novel application of category theory for deep learning. 
We show that a number of group equivariant neural networks 
-- for the groups $S_n$, $O(n)$, $Sp(n)$ and $SO(n)$ --  
whose layers are some tensor power of $\mathbb{R}^{n}$
that have recently appeared in the literature 
(\cite{maron2018, pearcecrump, pearcecrumpB, godfrey})
can be understood in category theoretic terms.
We call this the "Categorification of Group Equivariant Neural Networks" because, in proving this result, we replace a number of set-theoretic constructions with category theoretic notions
that results in a deeper structure for understanding and working with the layer functions of the neural networks themselves. 
We wish to emphasise that the outcome of this process is not simply a case of 
rewriting the existing results in a different language,
but that, crucially, we 
obtain new insights into these neural networks 
from the richer structure that is established.
One particularly important consequence that we show is that any of the weight matrices that appear in the neural networks in question can be understood solely by using a certain type of combinatorial diagram that has a string-like quality to it.
By pulling on the strings or dragging their ends to different locations, we can use category theory to obtain new results for these group equivariant neural networks.
We describe a very powerful example of this idea, where the properties of the categorification 
lead to a recovery of the algorithm 
-- using a very different method --
proposed by \cite{godfrey}
for computing the result of a vector that is passed through a symmetric group equivariant linear layer. 
We suggest that our approach can be adapted to obtain an algorithm for computing the same procedure for the other groups mentioned in this paper; this result will appear in another paper by the same authors.

\section{Preliminaries}

\begin{figure}[t]
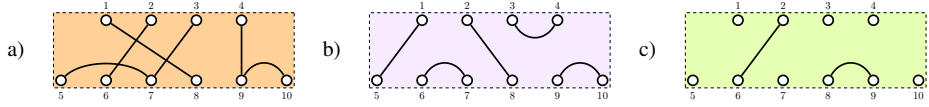

	\begin{center}
	\scalebox{0.4}{\tikzfig{partdiagrams1}}
		\caption{Examples of $(6,4)$--partition diagrams. b) is also a $(6,4)$--Brauer diagram, and c) is also a $10 \backslash 6$--diagram.}
	\label{partdiagrams1}
	\end{center}
\end{figure}

We choose our field of scalars to be $\mathbb{R}$ throughout. 
Tensor products are also taken over $\mathbb{R}$, unless otherwise stated.
Also, we let $[n]$ represent the set $\{1, \dots, n\}$. 

Recall that a representation of a group $G$ is a choice of vector space $V$ over $\mathbb{R}$ and a group homomorphism
\begin{equation} \label{grouprephom}
	\rho_V : G \rightarrow GL(V)	
\end{equation}
Furthermore, recall that a map $\phi : V \rightarrow W$ between two representations of $G$ is said to be $G$-equivariant if, for all $g \in G$ and $v \in V$, 
\begin{equation} \label{Gequivmapdefn}
	\phi(\rho_{V}(g)[v]) = \rho_{W}(g)[\phi(v)]
\end{equation}
We denote the set of all \textit{linear} $G$-equivariant maps between $V$ and $W$ by $\Hom_{G}(V,W)$. 
It can be shown that $\Hom_{G}(V,W)$ is a vector space over $\mathbb{R}$. 
See \cite{segal} for more details. 

\subsection{Tensor Power Spaces as Group Representations} \label{tenspowerspaces}

The groups of interest, namely, $S_n$, $O(n)$, $Sp(n)$, and $SO(n)$,
can all be viewed as subgroups of $GL(n)$. We use the symbol $G$ to refer to any of these groups in the following.
Recall that $\mathbb{R}^{n}$ has a standard basis that is given by $\{e_i \mid i \in [n]\}$, where $e_i$ has a $1$ in the $i^{\text{th}}$ position and is $0$ otherwise.

(Note that if $G = Sp(n)$, then $n = 2m$, and we label the indices by
$1, 1', \dots, m, m'$
and call the standard basis of $\mathbb{R}^{n}$ the symplectic basis.)

There exists a (left) action of $G$ on $\mathbb{R}^{n}$ that is given by left multiplication on the standard basis, which can be extended linearly to obtain a representation $G \rightarrow GL(\mathbb{R}^n)$.

Moreover, 
since the elements 
\begin{equation} \label{tensorelementfirst}
	e_I \coloneqq e_{i_1} \otimes e_{i_2} \otimes \dots \otimes e_{i_k} 
\end{equation}
for all $I \coloneqq (i_1, i_2, \dots, i_k) \in [n]^k$ form a basis of 
$(\mathbb{R}^{n})^{\otimes k}$,
the $k$--tensor power space of $\mathbb{R}^{n}$,
there also exists a (left) action of $G$ on $(\mathbb{R}^{n})^{\otimes k}$ that is given by
\begin{equation} 
	g \cdot e_I \coloneqq ge_{i_1} \otimes ge_{i_2} \otimes \dots \otimes ge_{i_k} 
\end{equation}
Again, this action can be extended linearly to obtain a representation $\rho_k: G \rightarrow GL((\mathbb{R}^n)^{\otimes k})$.

We are interested in the space of $G$--equivariant linear maps
between any two tensor power spaces of $\mathbb{R}^{n}$, 
$\Hom_G((\mathbb{R}^{n})^{\otimes k}, (\mathbb{R}^{n})^{\otimes l})$,
since these maps are the linear layer functions in the group equivariant neural networks of interest.


\subsection{Set Partition Diagrams} \label{setpartdiag}

\cite{pearcecrump, pearcecrumpB} showed that, for the groups $G$ in question, 
$\Hom_G((\mathbb{R}^{n})^{\otimes k}, (\mathbb{R}^{n})^{\otimes l})$
can be constructed from certain set partitions of $[l+k]$, and in particular, from their corresponding set partition diagrams. We review these constructions below.

For $l, k \in \mathbb{Z}_{\geq 0}$, consider the set 
$[l+k] \coloneqq \{1, \dots, l+k\}$
having $l+k$ elements. 
We can create a set partition of $[l+k]$ by partitioning it into a number of subsets.
We call the subsets of a set partition \textit{blocks}.
Let $\Pi_{l+k}$ be the set of all set partitions of $[l+k]$.
Then, for each set partition $\pi$ in $\Pi_{l+k}$, we can associate to it a diagram $d_\pi$,
called a $(k,l)$--partition diagram,
consisting of two
rows of vertices and edges between vertices such that there are
\begin{itemize}
	\item $l$ vertices on the top row, labelled left to right by $1, \dots, l$
	\item $k$ vertices on the bottom row, labelled left to right by $l+1, \dots, l+k$, and
	\item the edges between the vertices correspond to the connected components of $\pi$. 
\end{itemize}
As a result, $d_\pi$ represents the equivalence class of all diagrams with connected components equal to the blocks of $\pi$.

There are special types of $(k,l)$--partition diagrams that we are interested in, namely:
\begin{itemize}
	\item A $(k,l)$--Brauer diagram $d_\beta$ is a $(k,l)$--partition diagram where the size of every block in $\beta$ is exactly two.
	\item Given $k$ and $l$, an $(l+k)\backslash n$--diagram $d_\alpha$ is a $(k,l)$--partition diagram where exactly $n$ blocks in $\alpha$ have size one, with the rest having exactly size two. The vertices corresponding to the blocks of size one are called free vertices.
\end{itemize}

We give examples of these diagrams in Figure \ref{partdiagrams1}.

We can form a number of vector spaces as the $\mathbb{R}$-linear span of certain subsets of $(k,l)$--partition diagrams, as follows:
\begin{itemize}
	\item The partition space $P_k^l(n)$ is defined to be the $\mathbb{R}$-linear span of the set of all $(k,l)$--partition diagrams.
	\item The Brauer space $B_k^l(n)$ is defined to be the $\mathbb{R}$-linear span of the set of all $(k,l)$--Brauer diagrams.
	\item The Brauer--Grood space $D_k^l(n)$ is defined to be the $\mathbb{R}$-linear span of the set of all $(k,l)$--Brauer diagrams together with the set of all $(l+k)\backslash n$--diagrams.
\end{itemize}


Furthermore, we can define two $\mathbb{R}$-bilinear operations on $(k,l)$--partition diagrams
\begin{align}
	\text{composition:   } & \bullet: P_l^m(n) \times P_k^l(n) \rightarrow P_k^m(n) \label{composition} \\
	\text{tensor product:   } & \otimes: P_k^l(n) \times P_q^m(n) \rightarrow P_{k+q}^{l+m}(n) \label{tensorprod}
\end{align}
as follows:
		
		Composition: Let $d_{\pi_1} \in P_k^l(n)$ and $d_{\pi_2} \in P_l^m(n)$. 
		First, we concatenate the diagrams, written $d_{\pi_2} \circ d_{\pi_1}$, by putting $d_{\pi_1}$ below $d_{\pi_2}$, concatenating the edges in the middle row of vertices, and then removing all connected components that lie entirely in the middle row of the concatenated diagrams. 
Let $c(d_{\pi_2}, d_{\pi_1})$ be the number of connected components that are removed from the middle row in $d_{\pi_2} \circ d_{\pi_1}$.
		Then the composition is defined, using infix notation, as
\begin{equation} 
	d_{\pi_2} \bullet d_{\pi_1} 
	\coloneqq 
	n^{c(d_{\pi_2}, d_{\pi_1})} (d_{\pi_2} \circ d_{\pi_1})
\end{equation}
		Tensor Product: Let $d_{\pi_1} \in P_k^l(n)$ and $d_{\pi_2} \in P_q^m(n)$. Then $d_{\pi_1} \otimes d_{\pi_2}$ is defined to be the $(k+q, l+m)$--partition diagram obtained by horizontally placing $d_{\pi_1}$ to the left of $d_{\pi_2}$ without any overlapping of vertices.

It is clear that both of these operations are associative.

The composition and tensor product operations for $B_k^l(n)$ 
are inherited from the composition and tensor product operations for $P_k^l(n)$, defined in 
(\ref{composition}) and (\ref{tensorprod}) respectively.
However, the composition and tensor product operations for $D_k^l(n)$ are rather more involved; full details of their formulation can be found in the Technical Appendix.

\subsection{Group Equivariant Linear Layers} 
\label{groupequivlinlayers}

From the vector spaces defined in Section \ref{setpartdiag}, 
it is possible to obtain either a spanning set or a basis for 
$\Hom_G((\mathbb{R}^{n})^{\otimes k}, (\mathbb{R}^{n})^{\otimes l})$.
We give the form of the 
the spanning sets/bases, expressed in the basis of matrix units for
$\Hom((\mathbb{R}^{n})^{\otimes k}, (\mathbb{R}^{n})^{\otimes l})$,
in the Technical Appendix. 
Here, we reproduce a number of results which describe the existence of a surjective map from each of the vector spaces defined in Section \ref{setpartdiag} onto its corresponding vector space of $G$--equivariant linear maps, which arises from the spanning sets/bases.

\begin{theorem}[Diagram Basis when $G = S_n$] 
	\cite[Theorem 5.4]{godfrey}
	\label{diagbasisSn}
	
	For any $k, l \in \mathbb{Z}_{\geq 0}$ and any
	$n \in \mathbb{Z}_{\geq 1}$, there is a surjection of vector spaces
	\begin{equation} \label{surjectionSn}
		\Theta_{k,n}^l : P_k^l(n) \rightarrow 
		\Hom_{S_n}((\mathbb{R}^{n})^{\otimes k}, (\mathbb{R}^{n})^{\otimes l})
	\end{equation}
	that is given by
	\begin{equation}
		d_\pi \mapsto E_\pi
	\end{equation}
	for all $(k,l)$--partition diagrams $d_\pi$, where $E_\pi$ is given in the Technical Appendix.
	
	In particular, the set
	\begin{equation} \label{klSnSpanningSet}
		\{E_\pi \mid d_\pi \text{ is a } (k,l) \text{--partition diagram having at most } n \text{ blocks} \}
	\end{equation}
	is a basis for
	$\Hom_{S_n}((\mathbb{R}^{n})^{\otimes k}, (\mathbb{R}^{n})^{\otimes l})$
	in the standard basis of $\mathbb{R}^{n}$, 
	of size $\Bell(l+k,n) \coloneqq
	\sum_{t=1}^{n} 
		\begin{Bsmallmatrix}
		l+k\\
		t 
		\end{Bsmallmatrix}
	$, where 
	$
		\begin{Bsmallmatrix}
		l+k\\
		t 
		\end{Bsmallmatrix}
	$
	is the Stirling number of the second kind.
\end{theorem}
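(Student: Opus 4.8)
The plan is to recall the explicit form of $E_\pi$ and then combine a standard orbit--counting argument with a unitriangularity (M\"obius inversion) argument over the partition lattice. First I would set up the dictionary between equivariant maps and matrices: writing a linear map $\phi \colon (\mathbb{R}^n)^{\otimes k} \to (\mathbb{R}^n)^{\otimes l}$ in the matrix--unit basis as $\phi = \sum_{I \in [n]^l,\, J \in [n]^k} X_{I,J}\, E_{I,J}$, the $S_n$--equivariance condition (\ref{Gequivmapdefn}) unwinds to $X_{\sigma(I),\sigma(J)} = X_{I,J}$ for all $\sigma \in S_n$, where $S_n$ acts diagonally on the entries of the tuple $(I,J) \in [n]^{l+k}$. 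Hence $\Hom_{S_n}((\mathbb{R}^n)^{\otimes k},(\mathbb{R}^n)^{\otimes l})$ has, as a basis, the orbit sums $X_{\mathcal{O}} \coloneqq \sum_{(I,J) \in \mathcal{O}} E_{I,J}$ indexed by the $S_n$--orbits $\mathcal{O}$ on $[n]^{l+k}$. Two tuples lie in the same orbit exactly when they induce the same equality pattern, i.e.\ the same set partition of $[l+k]$, and a set partition $\rho$ is realised by some tuple in $[n]^{l+k}$ if and only if $\rho$ has at most $n$ blocks. This already gives $\dim \Hom_{S_n}((\mathbb{R}^n)^{\otimes k},(\mathbb{R}^n)^{\otimes l}) = \Bell(l+k,n)$, the number of set partitions of $[l+k]$ into at most $n$ nonempty blocks.

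Next I would recall from the Technical Appendix that $E_\pi = \sum E_{I,J}$, the sum being over all $(I,J) \in [n]^{l+k}$ that are constant on the blocks of $\pi$. Since this indexing set of tuples is $S_n$--stable, each $E_\pi$ is $S_n$--equivariant, so $\Theta_{k,n}^l$ is a well-defined linear map into $\Hom_{S_n}$. The crucial observation is that a tuple is constant on $\pi$ precisely when its induced set partition $\rho$ is a coarsening of $\pi$; grouping the tuples by their induced partition therefore yields the triangular expansion $E_\pi = \sum_{\rho} X_{\rho}$, the sum running over set partitions $\rho$ coarser than $\pi$ (equivalently, $\pi$ refines $\rho$), where $X_\rho$ is the orbit sum for the orbit realising $\rho$. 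When $|\pi| \le n$, every coarsening $\rho$ of $\pi$ satisfies $|\rho| \le |\pi| \le n$, so each $X_\rho$ occurring is a genuine basis vector, and the coefficient of $X_\pi$ itself is $1$. Thus the passage from $\{X_\rho : |\rho| \le n\}$ to $\{E_\pi : |\pi| \le n\}$ is unitriangular with respect to any linear extension of the refinement order on set partitions with at most $n$ blocks, hence invertible.

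From here the theorem follows quickly. Unitriangularity shows that $\{E_\pi : d_\pi \text{ has at most } n \text{ blocks}\}$ is linearly independent; since its cardinality equals $\Bell(l+k,n) = \dim \Hom_{S_n}((\mathbb{R}^n)^{\otimes k},(\mathbb{R}^n)^{\otimes l})$, it is a basis, which is the ``in particular'' statement, and the stated size is just the number of set partitions of an $(l+k)$--element set into at most $n$ parts, $\sum_{t=1}^{n} \begin{Bsmallmatrix} l+k \\ t \end{Bsmallmatrix}$. In particular $\Theta_{k,n}^l$ is surjective; for completeness one checks that the remaining generators $E_\pi$ with $|\pi| > n$ also land in this span, via the same expansion $E_\pi = \sum_{\rho \text{ coarsens } \pi,\ |\rho| \le n} X_\rho$, so $\Theta_{k,n}^l$ surjects from all of $P_k^l(n)$ (its kernel being nontrivial exactly when $n < l+k$). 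The main work --- and the only genuinely delicate point --- is the bookkeeping in the triangular expansion: keeping straight the direction of the refinement order, the fact that ``constant on $\pi$'' produces coarsenings rather than refinements of $\pi$, and the realisability constraint $|\rho| \le n$ that makes the relevant sub-poset closed under the coarsenings that appear. Once that is pinned down, the M\"obius inversion over the partition lattice is routine, and this recovers the orbit--basis computation familiar from the $k = l$ case.
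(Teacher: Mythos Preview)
Your argument is correct and is essentially the standard proof of this result: identify $\Hom_{S_n}$ with functions constant on $S_n$--orbits of $[n]^{l+k}$, parametrise the orbits by set partitions with at most $n$ blocks to get the orbit basis $\{X_\rho\}$, and then observe the unitriangular relation $E_\pi = \sum_{\rho \text{ coarsens } \pi} X_\rho$ on the sub-poset $\{|\pi|\le n\}$, with the remaining $E_\pi$ for $|\pi|>n$ handled by the same expansion restricted to realisable coarsenings. The bookkeeping you flag (direction of refinement, closure of $\{|\rho|\le n\}$ under coarsening) is handled correctly.

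There is nothing to compare against in this paper, however: Theorem~\ref{diagbasisSn} is not proved here but is quoted from \cite[Theorem 5.4]{godfrey}; the Technical Appendix only supplies the definition of $E_\pi$ (Definition~\ref{Sndiagdefn}) and uses the theorem as input to the proof of Theorem~\ref{partfunctor}. Your proof is precisely the argument one finds in the cited source (and, in the $k=l$ case, in the partition--algebra literature going back to Jones), so it would serve perfectly well as a self-contained replacement for the citation.
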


\begin{theorem}
	[Spanning set when $G = O(n)$]
	\cite[Theorem 6.5]{pearcecrumpB}
	\label{spanningsetO(n)}

	For any $k, l \in \mathbb{Z}_{\geq 0}$ and any
	$n \in \mathbb{Z}_{\geq 1}$, there is a surjection of vector spaces
	\begin{equation} \label{surjectionO(n)}
		\Phi_{k,n}^l : B_k^l(n) \rightarrow 
		\Hom_{O(n)}((\mathbb{R}^{n})^{\otimes k}, (\mathbb{R}^{n})^{\otimes l})
	\end{equation}
	that is given by
	\begin{equation}
		d_\beta \mapsto E_\beta
	\end{equation}
	for all $(k,l)$--Brauer diagrams $d_\beta$, where $E_\beta$ 
	is given in the Technical Appendix.
	
	In particular, the set
	\begin{equation} \label{klOnSpanningSet}
		\{E_\beta \mid d_\beta \text{ is a } (k,l) \text{--Brauer diagram} \}
	\end{equation}
	is a spanning set for
	$\Hom_{O(n)}((\mathbb{R}^{n})^{\otimes k}, (\mathbb{R}^{n})^{\otimes l})$
	in the standard basis of $\mathbb{R}^{n}$, 
	of size $0$ when $l+k$ is odd, and of size $(l+k-1)!!$ when $l+k$ is even.
\end{theorem}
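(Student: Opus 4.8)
The statement is, in essence, the First Fundamental Theorem (FFT) of classical invariant theory for the orthogonal group, recast in the language of equivariant maps between tensor powers. The plan is to pass to the invariant-theoretic picture, invoke the FFT there, and then translate the resulting spanning set back into the diagram calculus.

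First I would use the fact that the standard representation $\mathbb{R}^n$ of $O(n)$ is self-dual: the $O(n)$-invariant symmetric bilinear form with Gram matrix $I_n$ in the standard basis gives an isomorphism $\mathbb{R}^n \cong (\mathbb{R}^n)^{*}$ of $O(n)$-representations. Combined with the natural isomorphism $\Hom(V,W) \cong W \otimes V^{*}$ and the fact that, for any representation $U$ of a group, the $G$-equivariant maps $\mathbb{R} \to U$ are exactly $U^{G}$, this yields
\begin{equation*}
	\Hom_{O(n)}\bigl((\mathbb{R}^n)^{\otimes k}, (\mathbb{R}^n)^{\otimes l}\bigr)
	\;\cong\;
	\bigl((\mathbb{R}^n)^{\otimes l} \otimes ((\mathbb{R}^n)^{*})^{\otimes k}\bigr)^{O(n)}
	\;\cong\;
	\bigl((\mathbb{R}^n)^{\otimes (l+k)}\bigr)^{O(n)},
\end{equation*}
under which a matrix unit $E_{I,J}$ (the map sending $e_J \mapsto e_I$ and every other basis vector to $0$) corresponds to the basis tensor $e_I \otimes e_J$, with $(I,J) \in [n]^{l+k}$. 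So it is enough to produce a spanning set, indexed by $(k,l)$-Brauer diagrams, for the space of $O(n)$-invariant tensors in $(\mathbb{R}^n)^{\otimes(l+k)}$.

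Next I would invoke the FFT for $O(n)$: the space $\bigl((\mathbb{R}^n)^{\otimes m}\bigr)^{O(n)}$ is spanned by the complete-contraction tensors $\theta_{\mathcal{M}}$ indexed by the perfect matchings $\mathcal{M}$ of $[m]$, where $\theta_{\mathcal{M}} = \sum e_{f(1)} \otimes \cdots \otimes e_{f(m)}$, the sum running over all functions $f : [m] \to [n]$ that are constant on each pair of $\mathcal{M}$. (If one does not wish to cite the FFT, it can be derived by averaging over $O(n)$ against Haar measure and reducing to Weyl's description of the polynomial invariants of a collection of vectors, which are generated by their pairwise inner products; polarisation then extracts the multilinear statement.) When $m = l+k$ is odd there is no perfect matching, so the invariant space is $\{0\}$; when $m$ is even, the perfect matchings of an $m$-set number $(m-1)!! = (m-1)(m-3)\cdots 3 \cdot 1$.

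Finally I would make the dictionary precise. A perfect matching of $[l+k]$, with $1,\dots,l$ drawn on a top row and $l+1,\dots,l+k$ on a bottom row, is exactly a $(k,l)$-Brauer diagram $d_\beta$; tracing back the isomorphisms above, the invariant tensor $\theta_{\mathcal{M}}$ corresponds to the equivariant map whose matrix, in the basis of matrix units for $\Hom((\mathbb{R}^n)^{\otimes k},(\mathbb{R}^n)^{\otimes l})$, has $(I,J)$-entry equal to $1$ when the tuple $(I,J)$ is constant on every block of $\beta$ and $0$ otherwise -- that is, the matrix $E_\beta$ of the Technical Appendix. Hence $d_\beta \mapsto E_\beta$, extended $\mathbb{R}$-linearly, is a well-defined linear map $\Phi_{k,n}^l : B_k^l(n) \to \Hom_{O(n)}((\mathbb{R}^n)^{\otimes k},(\mathbb{R}^n)^{\otimes l})$ whose image, by the previous step, is the whole target, and the stated cardinalities are the matching counts. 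The only substantive input is the FFT for $O(n)$; the rest is bookkeeping of the correspondence between matchings, Brauer diagrams, contractions, and matrix units, which must be handled carefully so that the normalisations agree but is otherwise routine. I would expect the FFT -- or, if proving it from scratch, the averaging and polarisation argument -- to be the main obstacle. Note that the $E_\beta$ are only a spanning set and not in general a basis (they become linearly dependent once $n$ is small relative to $l+k$); pinning down the relations would require the Second Fundamental Theorem, which is not needed here.
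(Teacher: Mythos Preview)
Your argument is correct and is essentially the classical route to this result: reduce to the invariant space via self-duality of the standard $O(n)$-representation, invoke Weyl's First Fundamental Theorem to obtain the perfect-matching tensors as a spanning set, and then identify these with the $E_\beta$ of the Technical Appendix. The bookkeeping you describe matches the appendix's Definition of $E_\beta$ (constancy of indices on blocks of size two), and the cardinality count is right.

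Note, however, that the present paper does \emph{not} supply its own proof of this theorem: it is quoted as \cite[Theorem~6.5]{pearcecrumpB} and used as input to the later categorical results (Theorem~\ref{brauerO(n)functor}). So there is no in-paper proof to compare against. That said, the approach you outline is exactly the one taken in the cited source, which likewise reduces to the FFT for $O(n)$; you have reconstructed the intended argument.
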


\begin{theorem} 
	[Spanning set when $G = Sp(n), n = 2m$]
	\cite[Theorem 6.6]{pearcecrumpB}
	\label{spanningsetSp(n)}

	For any $k, l \in \mathbb{Z}_{\geq 0}$ and any
	$n \in \mathbb{Z}_{\geq 2}$ such that $n = 2m$, there is a surjection of vector spaces
	\begin{equation} \label{surjectionSp(n)}
		X_{k,n}^l : B_k^l(n) \rightarrow 
		\Hom_{Sp(n)}((\mathbb{R}^{n})^{\otimes k}, (\mathbb{R}^{n})^{\otimes l})
	\end{equation}
	that is given by
	\begin{equation}
		d_\beta \mapsto F_\beta
	\end{equation}
	for all $(k,l)$--Brauer diagrams $d_\beta$, where $F_\beta$ is 
	given in the Technical Appendix.

	In particular, the set
	\begin{equation} \label{klSpnSpanningSet}
		\{F_\beta \mid d_\beta \text{ is a } (k,l) \text{--Brauer diagram} \}
	\end{equation}
	is a spanning set for
	$\Hom_{Sp(n)}((\mathbb{R}^{n})^{\otimes k}, (\mathbb{R}^{n})^{\otimes l})$, for $n = 2m$,
	in the symplectic basis of $\mathbb{R}^{n}$,
	of size $0$ when $l+k$ is odd, and of size $(l+k-1)!!$ when $l+k$ is even.
\end{theorem}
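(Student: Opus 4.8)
The plan is to run the same argument as for $O(n)$ (Theorem~\ref{spanningsetO(n)}), with the non-degenerate \emph{symmetric} form that $O(n)$ preserves replaced by the non-degenerate \emph{skew-symmetric} form $\omega$ on $\mathbb{R}^{n}$ that defines $Sp(n)$; this is exactly where the hypothesis $n = 2m$ (so that $\omega$ is non-degenerate) is used. The first step is to record the vector space identification
\[
\Hom_{Sp(n)}((\mathbb{R}^{n})^{\otimes k}, (\mathbb{R}^{n})^{\otimes l}) \;\cong\; \bigl((\mathbb{R}^{n})^{\otimes (l+k)}\bigr)^{Sp(n)} .
\]
This follows because $\omega$ exhibits $\mathbb{R}^{n}$ as self-dual as an $Sp(n)$-representation, so that $((\mathbb{R}^{n})^{*})^{\otimes k} \otimes (\mathbb{R}^{n})^{\otimes l} \cong (\mathbb{R}^{n})^{\otimes (l+k)}$ $Sp(n)$-equivariantly; combining this with $\Hom(V,W) \cong V^{*} \otimes W$ and the fact that $\Hom_{G}$ is the $G$-invariant subspace of the internal hom gives the displayed isomorphism. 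I would fix this isomorphism explicitly in the symplectic basis of $\mathbb{R}^{n}$ so that it is compatible with the diagrammatics below.

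The second step is to invoke the First Fundamental Theorem of invariant theory for $Sp(n)$, valid here since $\mathbb{R}$ has characteristic zero: the invariant space $((\mathbb{R}^{n})^{\otimes(l+k)})^{Sp(n)}$ is $0$ when $l+k$ is odd, and when $l+k = 2r$ is even it is spanned by the $(2r-1)!! = (l+k-1)!!$ tensors obtained by choosing a perfect matching of the $l+k$ tensor legs and placing, into each matched pair of legs, the bivector in $(\mathbb{R}^{n})^{\otimes 2}$ corresponding to $\omega$. These matchings are precisely the $(k,l)$--Brauer diagrams, and transporting each such spanning tensor through the isomorphism of the first step yields, for each $(k,l)$--Brauer diagram $d_\beta$, an explicit $Sp(n)$-equivariant linear map $F_\beta$ (contract the input legs joined by $d_\beta$ against $\omega$, insert the $\omega$-bivector into the output legs joined by $d_\beta$, and route the remaining strands as identities); its matrix-unit expansion is the one recorded in the Technical Appendix, and its $Sp(n)$-equivariance is immediate from the $Sp(n)$-invariance of $\omega$.

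The third step is to assemble these into $X_{k,n}^l$: it is the $\mathbb{R}$-linear extension of $d_\beta \mapsto F_\beta$ from the basis of $(k,l)$--Brauer diagrams of $B_k^l(n)$. Surjectivity is exactly the assertion that the $F_\beta$ span $\Hom_{Sp(n)}((\mathbb{R}^{n})^{\otimes k}, (\mathbb{R}^{n})^{\otimes l})$, which is the FFT conclusion transported across the isomorphism; and the stated cardinalities ($0$ when $l+k$ is odd, $(l+k-1)!!$ when $l+k$ is even) are simply the number of $(k,l)$--Brauer diagrams, i.e. the number of perfect matchings of $[l+k]$.

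I expect the main obstacle to be the bookkeeping of signs. Since $\omega$ is skew-symmetric, its associated bivector is anti-symmetric, so swapping the two endpoints of an edge of $d_\beta$, or isotoping the diagram past a crossing, introduces a sign. Consequently the definition of $F_\beta$ must come equipped with a fixed orientation/sign convention on the edges of $d_\beta$, and one must verify that, with this convention, $F_\beta$ depends only on the underlying pairing (the block structure of $\beta$) and that the convention is consistent with the way the FFT generators were written down; this is the one place where the symplectic argument genuinely diverges from the orthogonal one, the rest being a sign-decorated mirror of it. A secondary point to state carefully is the non-degeneracy of $\omega$ (hence $n = 2m \geq 2$), since this is what makes $\mathbb{R}^{n}$ self-dual and the contraction/insertion operations well-defined in the first place.
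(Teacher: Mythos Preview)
The paper does not actually prove this theorem: it is stated as a citation of \cite[Theorem 6.6]{pearcecrumpB} and only the definition of $F_\beta$ is supplied in the Technical Appendix, with no argument for surjectivity. So there is no ``paper's own proof'' to compare against; the result is imported wholesale.

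That said, your proposal is the standard and correct route to this result. The reduction to $((\mathbb{R}^n)^{\otimes(l+k)})^{Sp(n)}$ via self-duality under $\omega$, followed by the First Fundamental Theorem for $Sp(n)$, is exactly how the cited reference (and the classical sources behind it, going back to Brauer and Weyl) establish the spanning statement; the identification of perfect matchings with $(k,l)$--Brauer diagrams and the double-factorial count are immediate. Your flagged concern about sign bookkeeping is the right thing to worry about, and it is handled in the Appendix's definition of $F_\beta$ via the $\gamma_{r_p,u_p}$ convention~(\ref{gammarpup}): edges joining vertices in \emph{different} rows carry a Kronecker $\delta$ (no sign), while edges joining vertices in the \emph{same} row carry the skew form $\epsilon$; you would want to check that your contraction/insertion description reproduces exactly this convention, but once that is matched there is nothing further to do.
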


\begin{theorem} 
	[Spanning set when $G = SO(n)$]
	\cite[Theorem 6.7]{pearcecrumpB}
	\label{spanningsetSO(n)}

	For any $k, l \in \mathbb{Z}_{\geq 0}$ and any $n \in \mathbb{Z}_{\geq 1}$, there is a surjection of vector spaces
	\begin{equation} \label{surjectionSO(n)}
		\Psi_{k,n}^l : D_k^l(n) \rightarrow 
		\Hom_{SO(n)}((\mathbb{R}^{n})^{\otimes k}, (\mathbb{R}^{n})^{\otimes l})
	\end{equation}
	that is given by
	\begin{equation}
		d_\beta \mapsto E_\beta
	\end{equation}
	if $d_\beta$ is a $(k,l)$--Brauer diagram, where $E_\beta$
	is given in the Technical Appendix,
	and by
	\begin{equation} \label{surjdalpha}
		d_\alpha \mapsto H_\alpha
	\end{equation}
	if $d_\alpha$ is a $(k+l)\backslash n$--diagram, where $H_\alpha$ is 
	is also given in the Technical Appendix
	
	In particular, the set
	\begin{equation} \label{SOn2SpanningSet}
		\{E_\beta\}_{\beta}
		\cup
		\{H_\alpha\}_{\alpha}
	\end{equation}
	where $d_\beta$ is a $(k,l)$--Brauer diagram, and $d_\alpha$ is a $(l+k) \backslash n$--diagram,
	is a spanning set for
	$\Hom_{SO(n)}((\mathbb{R}^{n})^{\otimes k}, (\mathbb{R}^{n})^{\otimes l})$
	in the standard basis of $\mathbb{R}^{n}$.
\end{theorem}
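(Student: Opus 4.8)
This is the $SO(n)$ analogue of Theorems \ref{diagbasisSn}--\ref{spanningsetSp(n)}, and I would prove it via classical invariant theory. The first move is to identify the Hom-space with a space of invariant tensors: since $SO(n)$ preserves the standard symmetric bilinear form on $\mathbb{R}^n$, the induced map $\mathbb{R}^n \to (\mathbb{R}^n)^*$ is an isomorphism of $SO(n)$-representations, so
\[
\Hom_{SO(n)}\bigl((\mathbb{R}^n)^{\otimes k}, (\mathbb{R}^n)^{\otimes l}\bigr) \;\cong\; \bigl((\mathbb{R}^n)^{\otimes(l+k)}\bigr)^{SO(n)} .
\]
It therefore suffices to produce a spanning set for the space of $SO(n)$-invariant tensors in $(\mathbb{R}^n)^{\otimes(l+k)}$ and transport it back along this isomorphism; the transport is precisely what turns an abstract invariant tensor into a linear map between tensor power spaces, and it is what forces the two-row shape of the diagrams -- the $l$ ``output'' legs become the top row and the $k$ ``input'' legs the bottom row.

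Next I would invoke the First Fundamental Theorem of invariant theory for $SO(n)$ (as in Weyl, or Goodman--Wallach): the algebra of $SO(n)$-invariants of a direct sum of copies of the standard representation is generated by the pairwise inner products of the vector arguments together with their $n \times n$ determinants. Polarising and passing to the tensor picture, this says that $\bigl((\mathbb{R}^n)^{\otimes(l+k)}\bigr)^{SO(n)}$ is spanned by tensors assembled as products of copies of the metric tensor $\sum_{i} e_i \otimes e_i$ and copies of the Levi-Civita tensor $\sum_{\sigma \in S_n} \sgn(\sigma)\, e_{\sigma(1)} \otimes \dots \otimes e_{\sigma(n)}$, with the legs distributed among the $l+k$ tensor slots. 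The key simplification is then the $\epsilon$--$\epsilon$ contraction identity: a product of two Levi-Civita tensors equals a signed sum over permutations of a product of $n$ metric tensors (the expansion of the determinant of a Gram matrix of Kronecker deltas). Applying this repeatedly, every spanning tensor can be rewritten as a linear combination of tensors that use \emph{only} metric tensors -- these record a perfect matching of the $l+k$ legs, i.e.\ a $(k,l)$--Brauer diagram -- or metric tensors together with \emph{exactly one} Levi-Civita tensor -- these have $n$ legs feeding the single $\epsilon$ and the remaining $l+k-n$ legs matched in pairs, i.e.\ an $(l+k)\backslash n$--diagram (there are none unless $l+k \geq n$; when $l+k < n$ the description collapses to that of Theorem \ref{spanningsetO(n)} restricted to $SO(n)$).

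I would then define $\Psi_{k,n}^l$ on the spanning diagrams of $D_k^l(n)$ by $d_\beta \mapsto E_\beta$ and $d_\alpha \mapsto H_\alpha$, where $E_\beta$ and $H_\alpha$ are the matrix-unit expressions of the corresponding invariant tensors recorded in the Technical Appendix, and extend $\mathbb{R}$-linearly. A short direct computation, using only that the metric and Levi-Civita tensors are $SO(n)$-invariant, shows that each $E_\beta$ and $H_\alpha$ genuinely lies in $\Hom_{SO(n)}$; well-definedness is clear because a diagram depends only on its underlying set partition, and relabelling the blocks leaves the associated tensor unchanged. Surjectivity is then immediate from the previous paragraph: under the isomorphism above every $SO(n)$-equivariant map is a linear combination of the tensors produced there, hence lies in the image of $\Psi_{k,n}^l$.

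The routine parts of this are the reduction to invariants and the equivariance check. The substantive work is the ``one $\epsilon$ suffices'' step: the $\epsilon$--$\epsilon$ identity has to be applied with care when several Levi-Civita tensors share legs with metric tensors, and the degenerate ranges need separate attention -- when $l+k < n$ there are no $\alpha$-diagrams and one must argue directly that $\Hom_{SO(n)} = \Hom_{O(n)}$ in that range, and the parity of $l+k$ relative to $n$ affects which diagrams are non-empty. A final point worth flagging is that $\Psi_{k,n}^l$ is asserted only to be a surjection, not an isomorphism: once $l+k$ is large the $(l+k)\backslash n$--diagrams overlap the span of the Brauer diagrams -- again because of the $\epsilon$--$\epsilon$ identity -- so, unlike the $S_n$ case, no linear-independence statement is available, and none is needed.
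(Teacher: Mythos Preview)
The paper does not actually prove this theorem: it is stated with a citation to \cite[Theorem 6.7]{pearcecrumpB} and no argument is given in either the main text or the Technical Appendix. So there is no ``paper's own proof'' to compare against here; the result is imported wholesale.

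That said, your proposed argument is the standard one and is essentially what lies behind the cited result. The reduction $\Hom_{SO(n)}((\mathbb{R}^n)^{\otimes k},(\mathbb{R}^n)^{\otimes l}) \cong ((\mathbb{R}^n)^{\otimes(l+k)})^{SO(n)}$ via the invariant bilinear form, the First Fundamental Theorem for $SO(n)$ giving inner products and determinants as generators, and the $\epsilon$--$\epsilon$ contraction identity reducing to at most one Levi--Civita factor together constitute exactly the classical route (Weyl; Goodman--Wallach; and, in the diagrammatic formulation closer to this paper, Lehrer--Zhang). Your identification of the two diagram types with ``zero $\epsilon$'s'' and ``one $\epsilon$'' is correct, as is your observation that only a spanning set---not a basis---is claimed, precisely because the $\epsilon$--$\epsilon$ identity creates linear dependencies once $l+k$ is large enough. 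The caveats you flag (parity of $l+k-n$, the degenerate range $l+k<n$) are the right ones to watch. In short: your proof is correct and is the expected argument; the paper simply chose to cite it rather than reproduce it.
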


\section{Strict $\mathbb{R}$--Linear Monoidal Categories and String Diagrams}

We appreciate that the language of category theory is not commonplace in the machine learning literature. To aid the reader, we have provided some foundational material in the Technical Appendix. 
Other good references are \cite{maclane, kock, turaev}.

At a basic level, category theory is concerned with objects and the relationships between objects. These relationships are called morphisms. 
A collection of objects and a collection of morphisms between objects (satisfying some additional conditions) form a category.
We can perform operations with morphisms, such as (vertically) composing them, to form new morphisms between objects.
Category theory makes it possible to abstract away specific details of structures, to focus instead on the relationships between them. 
We are interested not only in the relationships within a category but also in how relationships are preserved across different categories. These are described by functors.

In this paper, we are interested in categories that have a specific property called \textit{monoidal}, and the (monoidal) functors between these categories.
We will see in Section \ref{implicationsgroupequiv} that it is this property that has important implications for the group equivariant neural networks that we look at in this paper.
The monoidal property gives additional structure to the way in which objects and morphisms can be related.
In particular, monoidal categories have an additional operation, known as a tensor product, that allows objects and morphisms to be composed in a different way, which we call horizontal. 
Monoidal functors preserve the tensor product across monoidal categories.


We assume throughout that all categories are \textit{locally small}; that is, that the collection of morphisms between any two objects is a set.
In fact, all of the categories that we consider in this paper have morphism sets that are vector spaces: in particular, the morphisms between objects become linear maps.
We follow the presentation given in 
\cite{Hu2019} and \cite{Savage2021}
below.

\subsection{Strict $\mathbb{R}$--Linear Monoidal Categories}

\begin{defn} \label{categorystrictmonoidal}
	A category $\mathcal{C}$ is said to be \textit{strict monoidal} if it comes with
		a bifunctor $\otimes: \mathcal{C} \times \mathcal{C} \rightarrow \mathcal{C}$, called the tensor product, and
	a unit object $\mathds{1}$,
	such that, for all objects $X, Y, Z$ in $\mathcal{C}$, we have that
	\begin{equation}
		(X \otimes Y) \otimes Z = X \otimes (Y \otimes Z)
	\end{equation}
	\begin{equation}
		(\mathds{1} \otimes X) = X = (X \otimes \mathds{1})
	\end{equation}
	and, for all morphisms $f, g, h$ in $\mathcal{C}$, we have that
	\begin{equation} \label{assocbifunctor}
		(f \otimes g) \otimes h = f \otimes (g \otimes h)
	\end{equation}
	\begin{equation}
		(1_\mathds{1} \otimes f) = f = (f \otimes 1_\mathds{1})
	\end{equation}
	where $1_\mathds{1}$ is the identity morphism $\mathds{1} \rightarrow \mathds{1}$.
\end{defn}

\begin{remark}
	We can assume that all monoidal categories are strict (nonstrict monoidal categories would have isomorphisms in the place of the equalities given in Definition \ref{categorystrictmonoidal}) owing to a technical result known as Mac Lane's Coherence Theorem. See \cite{maclane} for more details.
\end{remark}

\begin{defn} \label{categorylinear}
	A category $\mathcal{C}$ is said to be $\mathbb{R}$\textit{--linear} if,
		for any two objects $X, Y$ in $\mathcal{C}$, the morphism space $\Hom_{\mathcal{C}}(X,Y)$ is a vector space over $\mathbb{R}$, and
		the composition of morphisms is $\mathbb{R}$--bilinear.
\end{defn}

Combining Definitions \ref{categorystrictmonoidal}
and \ref{categorylinear},
we get
\begin{defn}
	A category $\mathcal{C}$ is said to be \textit{strict} $\mathbb{R}$\textit{--linear monoidal} if it is a category that is both strict monoidal and $\mathbb{R}$--linear, such that the bifunctor $\otimes$ is $\mathbb{R}$--bilinear.
\end{defn}

Analogous to how there exists maps between sets, there exists "maps" between categories, known as functors.
In particular, we are interested in the following type of functors:
\begin{defn} \label{monoidalfunctordefn}
	Suppose that
	$(\mathcal{C}, \otimes_\mathcal{C}, \mathds{1}_\mathcal{C})$
	and
	$(\mathcal{D}, \otimes_\mathcal{D}, \mathds{1}_\mathcal{D})$
	are two strict $\mathbb{R}$--linear monoidal categories.

	A \textit{strict} $\mathbb{R}$\textit{--linear monoidal functor} from $\mathcal{C}$ to $\mathcal{D}$ is a functor $\mathcal{F}: \mathcal{C} \rightarrow \mathcal{D}$ such that
	\begin{enumerate}
		\item for all objects $X, Y$ in $\mathcal{C}$,
			$\mathcal{F}(X \otimes_\mathcal{C} Y) =
			\mathcal{F}(X) \otimes_\mathcal{D} \mathcal{F}(Y)$
		\item for all morphisms $f, g$ in $\mathcal{C}$,
			$\mathcal{F}(f \otimes_\mathcal{C} g) =
			\mathcal{F}(f) \otimes_\mathcal{D} \mathcal{F}(g)$
		\item $\mathcal{F}(\mathds{1}_\mathcal{C}) = \mathds{1}_\mathcal{D}$, and
		\item for all objects $X, Y$ in $\mathcal{C}$, the map
		\begin{equation} \label{maphomsets}
			\Hom_{\mathcal{C}}(X,Y) 
			\rightarrow 
			\Hom_{\mathcal{D}}(\mathcal{F}(X),\mathcal{F}(Y))
		\end{equation}
		given by
		$f \mapsto \mathcal{F}(f)$
		is $\mathbb{R}$--linear.
	\end{enumerate}
\end{defn}

The following definition will also prove to be very important in what follows.
\begin{defn}
	Given any two (locally small) categories $\mathcal{C}$ and $\mathcal{D}$
	(not necessarily strict $\mathbb{R}$--linear monoidal), 
	a functor $\mathcal{F}: \mathcal{C} \rightarrow \mathcal{D}$
	is said to be \textit{full} if the map (\ref{maphomsets}) is surjective for all objects $X, Y$ in $\mathcal{C}$.
\end{defn}

\subsection{String Diagrams}

Strict monoidal categories are particularly interesting because they can be represented by a very useful diagrammatic language known as string diagrams.
As this language is, in some sense, geometric in nature,
we will see that it is much easier to work with these diagrams than with their equivalent 
algebraic form.

\begin{defn} [String Diagrams]
	Suppose that $\mathcal{C}$ is a strict monoidal category.
	Let $W, X, Y$ and $Z$ be objects in $\mathcal{C}$, and let
$f: X \rightarrow Y$, $g: Y \rightarrow Z$, and $h: W \rightarrow Z$ be morphisms in $\mathcal{C}$.
	Then we can represent the morphisms
	$1_{X}: X \rightarrow X$,
	$f: X \rightarrow Y$,
	$g \circ f: X \rightarrow Z$ and
	$f \otimes h: X \otimes W \rightarrow Y \otimes Z$
	as diagrams in the following way:
	\begin{equation}
		\begin{aligned}
		\scalebox{0.75}{\tikzfig{stringdiagrams}}
		\end{aligned}
	\end{equation}
	In particular, the vertical composition of morphisms $g \circ f$ is obtained by placing $g$ above $f$, and the horizontal composition of morphisms $f \otimes h$ is obtained by horizontally placing $f$ to the left of $h$.

	We will often omit the labelling of the objects when they are clear or when they are not important.
\end{defn}

As an example of how useful string diagrams are when working with strict monoidal categories, the associativity of the bifunctor given in (\ref{assocbifunctor}) becomes immediately apparent.
Another, more involved, example is given by the interchange law that exists for any strict monoidal category. It can be expressed algebraically as
\begin{equation}
	(\mathds{1} \otimes g) \circ (f \otimes \mathds{1})
	=
	f \otimes g
	=
	(f \otimes \mathds{1}) \circ (\mathds{1} \otimes g) 
\end{equation}
Without string diagrams, it is somewhat tedious to prove this result -- see \cite[Section 2.2]{Savage2021} --
but with them, the result is intuitively obvious, if we allow ourselves to deform the diagrams by pulling on the strings:
\begin{equation}
	\begin{aligned}
		\scalebox{0.75}{\tikzfig{interchangelaw}}
	\end{aligned}
\end{equation}

\section{Categorification}

At this point, we have defined a vector space for each $k, l \in \mathbb{Z}_{\geq 0}$ that is the $\mathbb{R}$--linear span of a certain subset of $(k,l)$--partition diagrams. 
However, it should be apparent that, for all values of $k$ and $l$,
these vector spaces are all similar in nature, in that the set partition diagrams only differ by the number of vertices that appear in each row and by the connections that are made between vertices.
Moreover, the astute reader may have noticed that set partition diagrams
look like string diagrams. 
Given that string diagrams represent strict monoidal categories, and that we have
a collection of vector spaces 
for certain subsets of set partition diagrams, this implies that we should have a number of strict $\mathbb{R}$--linear monoidal categories!
Indeed we do; we formalise this intuition below.

\subsection{Category Definitions}

We assume throughout that $n \in \mathbb{Z}_{> 0}$. 
\begin{defn} \label{partitioncategory}
	We define the partition category $\mathcal{P}(n)$ to be 
	the category whose objects are the non--negative integers $\mathbb{Z}_{\geq 0} = \{0, 1, 2, \dots \}$,
	and, for any pair of objects $k$ and $l$, the morphism space 
	$\Hom_{\mathcal{P}(n)}(k,l)$ is $P_k^l(n)$.
	
	The vertical composition of morphisms is given by the composition of partition diagrams defined in (\ref{composition});
	the bifunctor (the horizontal composition of morphisms) is given by the tensor product of partition diagrams defined in (\ref{tensorprod});
	and the unit object is $0$.
\end{defn}

\begin{defn}
	We define the Brauer category $\mathcal{B}(n)$ to be the category
	whose objects are the same as those of $\mathcal{P}(n)$
	and, for any pair of objects $k$ and $l$, the morphism space 
	$\Hom_{\mathcal{B}(n)}(k,l)$ is $B_k^l(n)$.
	
	The vertical composition of morphisms, the horizontal composition of morphisms and the unit object are the same as those of $\mathcal{P}(n)$.
\end{defn}

\begin{defn}
	We define the Brauer--Grood category $\mathcal{BG}(n)$ to be the category
	whose objects are the same as those of $\mathcal{P}(n)$
	and, for any pair of objects $k$ and $l$, the morphism space 
	$\Hom_{\mathcal{BG}(n)}(k,l)$ is $D_k^l(n)$.
	
	
	The vertical composition of morphisms 
	and the horizontal composition of morphisms
	are the same as those defined for $D_k^l(n)$, which can be found in the Technical Appendix.
	The unit object is $0$.
\end{defn}
	
	It is easy to show that $\mathcal{P}(n)$, $\mathcal{B}(n)$ and $\mathcal{BG}(n)$ are strict $\mathbb{R}$--linear monoidal categories.


Also, for the four groups of interest, we can define the following category. 
\begin{defn}
	If $G$ is any of the groups $S_n, O(n), Sp(n)$ or $SO(n)$, then
	we define $\mathcal{C}(G)$ to be the category
	whose objects are pairs $\{((\mathbb{R}^n)^{\otimes k},\rho_k)\}_{k \in \mathbb{Z}_{\geq 0}}$, 
	where $\rho_k: G \rightarrow GL((\mathbb{R}^n)^{\otimes k})$ is the representation of $G$ given in Section \ref{tenspowerspaces},
	and,
	for any pair of objects
	$((\mathbb{R}^n)^{\otimes k},\rho_k)$ and $((\mathbb{R}^n)^{\otimes l},\rho_l)$,
	the morphism space, 
	$\Hom_{\mathcal{C}(G)}(((\mathbb{R}^n)^{\otimes k},\rho_k), ((\mathbb{R}^n)^{\otimes l},\rho_l))$ 
	is precisely
	$\Hom_{G}((\mathbb{R}^n)^{\otimes k}, (\mathbb{R}^n)^{\otimes l})$.

	The vertical composition of morphisms is given by the usual composition of linear maps, the 
	horizontal composition of morphisms 
	is given by the usual tensor product of linear maps, both of which are associative operations, and the unit object is given by $(\mathbb{R}, 1_\mathbb{R})$, where $1_\mathbb{R}$ is the one-dimensional trivial representation of $G$.
\end{defn}
	
	It can be shown that $\mathcal{C}(G)$ is a subcategory of the category of representations of $G$, $\Rep(G)$. See the Technical Appendix for more details. 
	In particular, it is also a strict $\mathbb{R}$--linear monoidal category.

\subsection{Full, Strict $\mathbb{R}$--Linear Monoidal Functors} \label{fullstrictmonfunct}

\begin{figure}
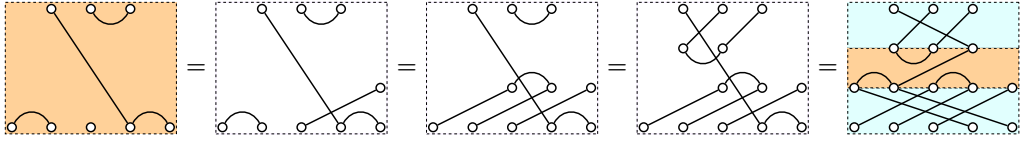

  \centering
	\scalebox{0.35}{\tikzfig{symmfactoring2}}
	\caption{We can use the string-like aspect of $(k,l)$--partition diagrams to factor them as a composition of a permutation in $S_k$, a \textit{planar} $(k,l)$--partition diagram, and a permutation in $S_l$.}
	\label{symmfactoring}
\end{figure}

Given that we have a number of categories from the vector spaces of the different types of partition diagrams, that we have a category from the group equivariant linear maps between tensor power spaces, and that we have a number of maps between these vector spaces -- as seen in Section \ref{groupequivlinlayers} --
we should have a number of functors between the newly defined categories.
Indeed, we have that

\begin{theorem} \label{partfunctor}
	There exists a full, strict $\mathbb{R}$--linear monoidal functor
	\begin{equation}
		\Theta : \mathcal{P}(n) \rightarrow \mathcal{C}(S_n)
	\end{equation}
	that is defined on the objects of $\mathcal{P}(n)$ by 
	$\Theta(k) \coloneqq ((\mathbb{R}^{n})^{\otimes k}, \rho_k)$ 
	and, for any objects $k,l$ of $\mathcal{P}(n)$, the map
	\begin{equation} \label{partmorphism}
		\Hom_{\mathcal{P}(n)}(k,l) 
		\rightarrow 
		\Hom_{\mathcal{C}(S_n)}(\Theta(k),\Theta(l))
	\end{equation}
	is precisely the map 
	\begin{equation} \label{partmorphismmap}
		\Theta_{k,n}^l : P_k^l(n) \rightarrow 
		\Hom_{S_n}((\mathbb{R}^{n})^{\otimes k}, (\mathbb{R}^{n})^{\otimes l})
	\end{equation}
	given in Theorem \ref{diagbasisSn}.
\end{theorem}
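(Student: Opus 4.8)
The plan is to verify directly that $\Theta$, as defined on objects and on morphisms, is a functor satisfying the four conditions of Definition \ref{monoidalfunctordefn}, and that it is full; fullness and the $\mathbb{R}$--linearity condition will come essentially for free from Theorem \ref{diagbasisSn}. First I would record that, for each pair $k,l$, the assignment $d_\pi \mapsto E_\pi$ extended $\mathbb{R}$--linearly is by definition the map $\Theta_{k,n}^l$ of Theorem \ref{diagbasisSn}; hence the map on morphism spaces in (\ref{partmorphism}) is $\mathbb{R}$--linear, which is condition 4, and surjective, which is exactly fullness. The object definitions then give conditions 1 and 3 at once: $\Theta(0) = ((\mathbb{R}^n)^{\otimes 0}, \rho_0) = (\mathbb{R}, 1_\mathbb{R}) = \mathds{1}_{\mathcal{C}(S_n)}$, and, since by construction $\rho_{k+l}$ is the tensor product representation $\rho_k \otimes \rho_l$ under the identification $(\mathbb{R}^n)^{\otimes(k+l)} = (\mathbb{R}^n)^{\otimes k} \otimes (\mathbb{R}^n)^{\otimes l}$, we get $\Theta(k \otimes l) = \Theta(k+l) = \Theta(k) \otimes \Theta(l)$.

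The substance is therefore functoriality (preservation of identities and of composition) together with condition 2. For identities, I would check that the identity partition diagram $1_k \in P_k^k(n)$ --- the one whose blocks are the pairs $\{i, k+i\}$ joining the $i$-th top vertex to the $i$-th bottom vertex --- is sent by $\Theta_{k,n}^k$ to $1_{(\mathbb{R}^n)^{\otimes k}}$; this is a one-line check from the matrix-unit formula for $E_\pi$ recorded in the Technical Appendix. The key lemma is that $\Theta$ intertwines the diagram composition $\bullet$ of (\ref{composition}) with composition of linear maps, i.e. for $d_{\pi_1} \in P_k^l(n)$ and $d_{\pi_2} \in P_l^m(n)$,
\begin{equation}
	E_{\pi_2} \circ E_{\pi_1} \;=\; n^{\,c(d_{\pi_2}, d_{\pi_1})}\, E_{\pi_2 \circ \pi_1}.
\end{equation}
I would prove this by an index computation: expanding $E_{\pi_1}$ and $E_{\pi_2}$ in matrix units, the entries of the composite are obtained by summing over the indices attached to the $l$ middle vertices subject to the block constraints imposed by $\pi_1$ and by $\pi_2$; after concatenation, each connected component lying entirely in the middle row contributes one free summation index ranging over $[n]$ --- hence a factor of $n$ --- while the surviving index identifications are precisely the blocks of $\pi_2 \circ \pi_1$. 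Condition 2, compatibility with the tensor product ($E_{\pi_1 \otimes \pi_2} = E_{\pi_1} \otimes E_{\pi_2}$), is the analogous but easier computation: horizontal juxtaposition of diagrams produces no contraction, so the matrix-unit expansion of the left-hand side factors as the Kronecker product of the two expansions. The $\mathbb{R}$--bilinearity of $\otimes$ on morphisms then follows because $E_{(-)}$ is linear in its diagram argument and $\otimes$ of linear maps is bilinear; combined with the strictness of both monoidal structures, already noted for $\mathcal{P}(n)$ and $\mathcal{C}(S_n)$, this completes the verification.

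I expect the composition lemma to be the main obstacle, because it is the one place where the scaling factor $n^{c(d_{\pi_2}, d_{\pi_1})}$ --- which looks ad hoc from the diagrammatic side --- must be matched against an honest matrix-entry computation, and one has to argue carefully that each deleted middle component yields exactly one independent summation index, neither more nor fewer, and that the remaining identifications are exactly those of the concatenated partition after the middle components are removed. Everything else is bookkeeping. One could alternatively invoke the classical fact that $d_\pi \mapsto E_\pi$ realises the partition algebra $\End_{\mathcal{P}(n)}(k)$ as a subalgebra of $\End_{S_n}((\mathbb{R}^n)^{\otimes k})$, from which the composition lemma follows for endomorphisms and then for general morphisms by the usual padding argument; but carrying out the index computation directly keeps the proof within the framework established in this paper.
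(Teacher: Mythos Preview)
Your proposal is correct and follows the same overall approach as the paper: verify fullness and conditions 1--4 of Definition \ref{monoidalfunctordefn} directly, with the tensor-product compatibility $E_{\pi_1}\otimes E_{\pi_2}=E_{\pi_1\cup\pi_2}$ checked via the matrix-unit expansion of $E_\pi$.

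The one substantive difference is that you also supply a proof of functoriality --- in particular the composition lemma $E_{\pi_2}\circ E_{\pi_1}=n^{c(d_{\pi_2},d_{\pi_1})}E_{\pi_2\circ\pi_1}$ --- whereas the paper's proof only checks the four monoidal conditions and leaves the underlying functor property (preservation of composition and identities) implicit. Your index-counting argument for the factor $n^{c(d_{\pi_2},d_{\pi_1})}$ is the standard one and is exactly what is needed to fill that gap, so in this respect your write-up is more complete than the paper's. The alternative you mention at the end (invoking the partition-algebra homomorphism for endomorphisms and padding) would also work but, as you note, the direct computation keeps everything self-contained.
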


\begin{theorem} \label{brauerO(n)functor}
	There exists a full, strict $\mathbb{R}$--linear monoidal functor
	\begin{equation}
		\Phi : \mathcal{B}(n) \rightarrow \mathcal{C}(O(n))
	\end{equation}
	that is defined on the objects of $\mathcal{B}(n)$ by 
	$\Phi(k) \coloneqq ((\mathbb{R}^{n})^{\otimes k}, \rho_k)$
	and, for any objects $k,l$ of $\mathcal{B}(n)$, the map
	\begin{equation}	
		\Hom_{\mathcal{B}(n)}(k,l) 
		\rightarrow 
		\Hom_{\mathcal{C}(O(n))}(\Phi(k),\Phi(l))
	\end{equation}
	is the map
	\begin{equation} 
		\Phi_{k,n}^l : B_k^l(n) \rightarrow 
		\Hom_{O(n)}((\mathbb{R}^{n})^{\otimes k}, (\mathbb{R}^{n})^{\otimes l})
	\end{equation}
	given in Theorem \ref{spanningsetO(n)}.
\end{theorem}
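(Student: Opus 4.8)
The plan is to verify directly that the data $\Phi$ given in the statement defines a functor and then that it satisfies the four conditions of Definition~\ref{monoidalfunctordefn}; fullness and the $\mathbb{R}$--linearity of the hom--space maps will be handed to us immediately by Theorem~\ref{spanningsetO(n)}. Indeed, that theorem already asserts that, for each pair $k,l$, the assignment $d_\beta \mapsto E_\beta$ extends to a \emph{surjective $\mathbb{R}$--linear} map $\Phi_{k,n}^l : B_k^l(n) \to \Hom_{O(n)}((\mathbb{R}^n)^{\otimes k},(\mathbb{R}^n)^{\otimes l})$, which is precisely condition~(4) of Definition~\ref{monoidalfunctordefn} together with the fullness requirement; in particular it also guarantees that each $E_\beta$ really is $O(n)$--equivariant, so that $\Phi$ does land in $\mathcal{C}(O(n))$. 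Everything else is about compatibility with the two compositional structures.

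For functoriality, the identity morphism $1_k$ of $\mathcal{B}(n)$ is the $(k,k)$--Brauer diagram with blocks $\{i,k+i\}$ for $i \in [k]$, and a short matrix--unit computation shows $E_{1_k} = 1_{(\mathbb{R}^n)^{\otimes k}}$, so $\Phi(1_k) = 1_{\Phi(k)}$. For preservation of vertical composition, both $\bullet$ on $\mathcal{B}(n)$ and $\circ$ on $\mathcal{C}(O(n))$ are $\mathbb{R}$--bilinear and each $\Phi_{k,n}^l$ is linear, so it suffices to check the identity on basis diagrams, i.e.\ for a $(k,l)$--Brauer diagram $d_{\beta_1}$ and an $(l,m)$--Brauer diagram $d_{\beta_2}$ to show
\begin{equation}
	E_{\beta_2}\circ E_{\beta_1} = n^{c(d_{\beta_2},d_{\beta_1})}\,E_{\beta_2\circ\beta_1}.
\end{equation}
This is the crux: expanding the left--hand side in matrix units, the indices summed over are exactly those attached to the connected components that get deleted from the middle row of $d_{\beta_2}\circ d_{\beta_1}$, each such free summation contributing one factor of $n$, and the surviving terms reassemble into $E_{\beta_2\circ\beta_1}$. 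This is the Brauer--diagram analogue of the composition computation underlying Theorem~\ref{partfunctor}; I would invoke it from the matrix--unit conventions of the Technical Appendix (cf.\ \cite{pearcecrumpB}) rather than reproduce it.

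For the strict monoidal structure: on objects $\Phi(k\otimes l)=\Phi(k+l)=((\mathbb{R}^n)^{\otimes(k+l)},\rho_{k+l})$, and because the $O(n)$--action on a tensor power is the diagonal one this equals $((\mathbb{R}^n)^{\otimes k},\rho_k)\otimes((\mathbb{R}^n)^{\otimes l},\rho_l)=\Phi(k)\otimes\Phi(l)$; on the unit, $\Phi(0)=((\mathbb{R}^n)^{\otimes 0},\rho_0)=(\mathbb{R},1_\mathbb{R})=\mathds{1}_{\mathcal{C}(O(n))}$. For morphisms, reducing again by bilinearity to basis diagrams, one needs $E_{\beta_1\otimes\beta_2}=E_{\beta_1}\otimes E_{\beta_2}$, which is immediate from the fact that horizontal juxtaposition of Brauer diagrams takes disjoint unions of block sets and that $E_\beta$ is defined blockwise in matrix units. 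A cleaner alternative worth mentioning is to avoid re--doing any of these computations: $\mathcal{B}(n)$ is a monoidal subcategory of $\mathcal{P}(n)$, and since $S_n \le O(n)$ there is a faithful strict $\mathbb{R}$--linear monoidal functor $\mathcal{C}(O(n))\to\mathcal{C}(S_n)$ that is the identity on underlying spaces and the inclusion $\Hom_{O(n)}\hookrightarrow\Hom_{S_n}$ on morphisms; the composite $\mathcal{B}(n)\hookrightarrow\mathcal{P}(n)\xrightarrow{\Theta}\mathcal{C}(S_n)$ sends $d_\beta$ to $E_\beta$, which is $O(n)$--equivariant by Theorem~\ref{spanningsetO(n)}, so this composite factors through $\mathcal{C}(O(n))$; the factoring functor is $\Phi$, and since the faithful functor above is injective on objects and morphisms it reflects functoriality and the strict monoidal equalities, which $\Phi$ therefore inherits from $\Theta$ (Theorem~\ref{partfunctor}).

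The main obstacle in the direct route is the composition identity with the factor $n^{c(d_{\beta_2},d_{\beta_1})}$: getting the index bookkeeping precise enough that each closed loop appearing in the middle row of the concatenated diagram produces exactly one power of $n$ and nothing else. In the alternative route the only thing to check carefully is that the embedding $\mathcal{C}(O(n))\to\mathcal{C}(S_n)$ is well defined and faithful, but that is routine once one observes $S_n \le O(n)$ and that composition and tensor product of linear maps are indifferent to which group's equivariance one is tracking.
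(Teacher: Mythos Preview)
Your proposal is correct. Your direct route is essentially what the paper does: its proof of Theorem~\ref{brauerO(n)functor} is a single sentence referring the reader back to the proof of Theorem~\ref{partfunctor} with $\Theta$ replaced by $\Phi$, and the proof of Theorem~\ref{partfunctor} verifies exactly the four conditions of Definition~\ref{monoidalfunctordefn} that you check. In fact you are slightly more careful than the paper, since you also explicitly address functoriality (identities and preservation of vertical composition, including the $n^{c(d_{\beta_2},d_{\beta_1})}$ factor), whereas the paper's proof of Theorem~\ref{partfunctor} tacitly assumes $\Theta$ is a functor and only writes out the monoidal conditions.

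Your alternative route, factoring $\mathcal{B}(n)\hookrightarrow\mathcal{P}(n)\xrightarrow{\Theta}\mathcal{C}(S_n)$ through the faithful inclusion $\mathcal{C}(O(n))\hookrightarrow\mathcal{C}(S_n)$ induced by $S_n\le O(n)$, is not in the paper and is a genuinely different (and cleaner) argument: it leverages Theorem~\ref{partfunctor} directly rather than re-running its computations, at the modest cost of checking that $\mathcal{B}(n)$ is closed under $\bullet$ and $\otimes$ inside $\mathcal{P}(n)$ and that the restriction functor $\mathcal{C}(O(n))\to\mathcal{C}(S_n)$ is faithful and strict monoidal. Both approaches are fine; the paper's has the virtue of being self-contained for each group, while yours makes the logical dependence on the $S_n$ case explicit and avoids duplicated index bookkeeping.
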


\begin{theorem} \label{brauerSp(n)functor}
	There exists a full, strict $\mathbb{R}$--linear monoidal functor
	\begin{equation}
		X : \mathcal{B}(n) \rightarrow \mathcal{C}(Sp(n))
	\end{equation}
	that is defined on the objects of $\mathcal{B}(n)$ by 
	$X(k) \coloneqq ((\mathbb{R}^{n})^{\otimes k}, \rho_k)$
	and, for any objects $k,l$ of $\mathcal{B}(n)$, the map
	\begin{equation}	
		\Hom_{\mathcal{B}(n)}(k,l) 
		\rightarrow 
		\Hom_{\mathcal{C}(Sp(n))}(\Phi(k),\Phi(l))
	\end{equation}
	is the map
	\begin{equation} 
		X_{k,n}^l : B_k^l(n) \rightarrow 
		\Hom_{Sp(n)}((\mathbb{R}^{n})^{\otimes k}, (\mathbb{R}^{n})^{\otimes l})
	\end{equation}
	given in Theorem \ref{spanningsetSp(n)}.
\end{theorem}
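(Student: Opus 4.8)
The plan is to check the four conditions of Definition~\ref{monoidalfunctordefn} together with fullness, proceeding in close analogy with the argument for $O(n)$ in Theorem~\ref{brauerO(n)functor}; the only genuinely new ingredient is the sign bookkeeping forced by the antisymmetry of the symplectic form. Two of the required properties are essentially immediate. The $\mathbb{R}$--linearity of each map $\Hom_{\mathcal{B}(n)}(k,l) \to \Hom_{\mathcal{C}(Sp(n))}(X(k), X(l))$ is precisely the assertion in Theorem~\ref{spanningsetSp(n)} that $X_{k,n}^l : B_k^l(n) \to \Hom_{Sp(n)}((\mathbb{R}^n)^{\otimes k},(\mathbb{R}^n)^{\otimes l})$ is a linear map of vector spaces, and fullness of $X$ is exactly the assertion there that $\{F_\beta \mid d_\beta \text{ is a } (k,l)\text{--Brauer diagram}\}$ spans the target. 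So the substance lies in showing that $X$ is a functor and that it is strict monoidal.

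For functoriality, I would first verify that $X_{k,n}^k$ sends the identity $(k,k)$--Brauer diagram to the identity map on $(\mathbb{R}^n)^{\otimes k}$, which follows directly from the explicit form of $F_\beta$ in the symplectic basis given in the Technical Appendix. The key step is compatibility with vertical composition: for Brauer diagrams $d_{\pi_1} \in B_k^l(n)$ and $d_{\pi_2} \in B_l^m(n)$ one must show $X_{l,n}^m(d_{\pi_2}) \circ X_{k,n}^l(d_{\pi_1}) = X_{k,n}^m(d_{\pi_2} \bullet d_{\pi_1})$. Unwinding the definition of $\bullet$ from (\ref{composition}), the right-hand side is $n^{c(d_{\pi_2}, d_{\pi_1})}$ times the image of the concatenated diagram $d_{\pi_2} \circ d_{\pi_1}$. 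Writing $F_{\pi_1}$ and $F_{\pi_2}$ as sums of matrix units indexed by the index functions compatible with their edge sets and composing, the sum over the middle indices forces the surviving edges to be exactly those of $d_{\pi_2} \circ d_{\pi_1}$, while each connected component deleted from the middle row contributes a scalar obtained by contracting the symplectic form against its inverse. The claim amounts to showing that, with the signs carried by $F_\beta$, every such loop evaluates to $n$ (rather than $-n$), so that the total scalar is $n^{c(d_{\pi_2}, d_{\pi_1})}$; this is the one place where the symplectic argument genuinely departs from the orthogonal one, and I expect it to be the main obstacle. Bilinearity of both $\bullet$ and composition of linear maps then extends the identity from basis diagrams to all of $B_k^l(n) \times B_l^m(n)$.

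For strict monoidality, on objects we have $X(k \otimes l) = (\mathbb{R}^n)^{\otimes(k+l)} = X(k) \otimes X(l)$ and $X(\mathds{1}_{\mathcal{B}(n)}) = X(0) = (\mathbb{R}, 1_\mathbb{R}) = \mathds{1}_{\mathcal{C}(Sp(n))}$, and these are compatible with vertical composition since juxtaposing concatenated diagrams equals concatenating juxtaposed ones. On morphisms I must show $X_{k+q,n}^{l+m}(d_{\pi_1} \otimes d_{\pi_2}) = X_{k,n}^l(d_{\pi_1}) \otimes X_{q,n}^m(d_{\pi_2})$, where on the left $\otimes$ is horizontal placement of diagrams and on the right it is the tensor (Kronecker) product of linear maps; since horizontal placement simply takes the disjoint union of two edge sets acting on disjoint tensor factors, and $F_\beta$ is built as a sum of matrix units over index functions compatible with the edges of $\beta$, the two sides coincide term by term, and $\mathbb{R}$--bilinearity of both operations extends this to all of $B_k^l(n)$ and $B_q^m(n)$. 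A slightly slicker alternative for these last two steps would be to use the fact that $\mathcal{B}(n)$ is generated, as a strict $\mathbb{R}$--linear monoidal category, by the crossing and the cup and cap morphisms subject to a short list of relations, and then only check that $X$ sends these generators to $Sp(n)$--equivariant maps satisfying the corresponding relations; either way, once the composition identity above is in place, sign handling included, the remaining verifications are routine and the theorem follows.
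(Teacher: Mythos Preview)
Your proposal is correct and follows essentially the same strategy as the paper: obtain fullness and $\mathbb{R}$--linearity from Theorem~\ref{spanningsetSp(n)} and then verify the conditions of Definition~\ref{monoidalfunctordefn} directly. The paper's own proof is terser---it simply refers back to the proof of Theorem~\ref{partfunctor} and singles out only condition~2 (compatibility with $\otimes$ on morphisms) as needing adaptation, the key observation being that whether the two vertices of a block lie in the same row or in different rows of a Brauer diagram is preserved under horizontal juxtaposition $d_{\beta_1} \otimes d_{\beta_2}$, so the $\gamma$--factors defining $F_\beta$ split as a product over the two pieces; it does not spell out the vertical-composition check or the loop-evaluation sign that you (rightly) identify as the place where the symplectic case genuinely departs from the orthogonal one.
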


\begin{theorem} \label{brauerSO(n)functor}
	There exists a full, strict $\mathbb{R}$--linear monoidal functor
	\begin{equation}
		\Psi : \mathcal{BG}(n) \rightarrow \mathcal{C}(SO(n))
	\end{equation}
	that is defined on the objects of $\mathcal{BG}(n)$ by 
	$\Psi(k) \coloneqq ((\mathbb{R}^{n})^{\otimes k}, \rho_k)$
	and, for any objects $k,l$ of $\mathcal{B}(n)$, the map
	\begin{equation}	
		\Hom_{\mathcal{BG}(n)}(k,l) 
		\rightarrow 
		\Hom_{\mathcal{C}(SO(n))}(\Phi(k),\Phi(l))
	\end{equation}
	is the map
	\begin{equation} 
		\Psi_{k,n}^l : D_k^l(n) \rightarrow 
		\Hom_{SO(n)}((\mathbb{R}^{n})^{\otimes k}, (\mathbb{R}^{n})^{\otimes l})
	\end{equation}
	given in Theorem \ref{spanningsetSO(n)}.
\end{theorem}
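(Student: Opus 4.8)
The plan is to build $\Psi$ directly from the data already assembled. On objects we set $\Psi(k) \coloneqq ((\mathbb{R}^n)^{\otimes k}, \rho_k)$, so that $\Psi(0) = (\mathbb{R},1_\mathbb{R})$, and on the morphism space $\Hom_{\mathcal{BG}(n)}(k,l) = D_k^l(n)$ we let $\Psi$ act by the map $\Psi_{k,n}^l$ of Theorem \ref{spanningsetSO(n)}, that is, $d_\beta \mapsto E_\beta$ on $(k,l)$--Brauer diagrams, $d_\alpha \mapsto H_\alpha$ on $(l+k)\backslash n$--diagrams, extended $\mathbb{R}$--linearly. With this definition three of the required properties are immediate: each hom-set map is $\mathbb{R}$--linear and has image inside $\Hom_{SO(n)}((\mathbb{R}^n)^{\otimes k},(\mathbb{R}^n)^{\otimes l})$ because $\Psi_{k,n}^l$ is a surjection of vector spaces by Theorem \ref{spanningsetSO(n)}, and that same surjectivity is exactly the assertion that $\Psi$ is full. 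So the real content is to verify (i) that $\Psi$ is a functor, i.e.\ that it preserves identity morphisms and vertical composition, and (ii) that $\Psi$ is strict monoidal, i.e.\ that it preserves the tensor product on objects and on morphisms and sends the unit to the unit.

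Strict monoidality on objects and on the unit is formal: under the canonical identification $(\mathbb{R}^n)^{\otimes(k+l)} \cong (\mathbb{R}^n)^{\otimes k} \otimes (\mathbb{R}^n)^{\otimes l}$ one has $\rho_{k+l} = \rho_k \otimes \rho_l$, so $\Psi(k \otimes l) = \Psi(k) \otimes \Psi(l)$, and $\Psi(0) = (\mathbb{R},1_\mathbb{R})$ is the unit of $\mathcal{C}(SO(n))$. For preservation of the tensor product on morphisms and of vertical composition I would first reduce to basis diagrams, using $\mathbb{R}$--bilinearity of $\otimes$ and $\bullet$ on the $\mathcal{BG}(n)$ side and of $\otimes$ and $\circ$ of linear maps on the $\mathcal{C}(SO(n))$ side. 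The identity morphisms and the case in which every diagram involved is a Brauer diagram are handled exactly as in the proof of Theorem \ref{brauerO(n)functor}: writing $E_\beta$ in the matrix-unit basis, horizontal juxtaposition of diagrams corresponds to the disjoint union of the index-matching conditions, which is precisely the matrix of a tensor product of linear maps, while concatenation-with-loop-removal corresponds to matrix multiplication, each deleted interior loop contributing a trace $\sum_i \delta_{ii} = n$ and thereby accounting for the scalar $n^{c(d_{\pi_2},d_{\pi_1})}$.

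The substantive part is the interaction of the $(l+k)\backslash n$--diagrams with the composition and tensor product of $\mathcal{BG}(n)$. These operations on $D_k^l(n)$ are \emph{not} plain diagram concatenation or juxtaposition --- a side-by-side pair of $\backslash n$--diagrams would carry $2n$ free vertices --- but the more involved operations recalled in the Technical Appendix, which are designed to encode the identity expressing a product of two Levi-Civita tensors as the determinant $\det(\delta_{i_a j_b})$ of Kronecker deltas. Concretely, I would check that when two $\backslash n$--diagrams are composed (respectively tensored), the product of the two $\varepsilon$--tensors occurring in $H_\alpha \circ H_{\alpha'}$ (respectively $H_\alpha \otimes H_{\alpha'}$) expands, via this $\varepsilon$--$\delta$ identity, into exactly the signed sum of $E_\beta$'s --- with the correct powers of $n$ from any closed loops and the correct pairings of free vertices --- that the Brauer--Grood operation produces on the diagram side; the mixed case of a $\backslash n$--diagram composed or tensored with a Brauer diagram is analogous but easier, since only one $\varepsilon$--tensor is present and no determinant expansion is needed.

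The main obstacle I anticipate is precisely this bookkeeping for the $\backslash n$--diagrams: tracking the signs in the determinant expansion, the way free vertices of the two diagrams get paired off versus connected through the middle row and deleted, and the powers of $n$, and then matching all of this against the definition of the Brauer--Grood composition and tensor product from the Technical Appendix. However, since those operations on $D_k^l(n)$ were \emph{defined} in order to model exactly these tensor identities, the verification should in the end be a careful but routine unwinding of the two definitions rather than anything needing a genuinely new idea; the $SO(n)$--specific input --- that the $E_\beta$ and $H_\alpha$ are $SO(n)$--equivariant and span --- has already been supplied by Theorem \ref{spanningsetSO(n)}.
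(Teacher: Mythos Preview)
Your outline is correct, and it is genuinely different from what the paper does: the paper's proof of Theorem~\ref{brauerSO(n)functor} is simply a citation to \cite[Theorem 6.1]{LehrerZhang2} and contains no argument of its own. By contrast, you sketch a direct verification, reducing to basis diagrams and handling the Brauer case as in Theorem~\ref{brauerO(n)functor}, then identifying the key algebraic fact for the $(l+k)\backslash n$--diagrams, namely that the jellyfish rules in the Technical Appendix encode the Levi--Civita identities (Rule~1 is the total antisymmetry of $\varepsilon$, the ``two legs joined $=0$'' consequence is the vanishing of $\varepsilon$ on repeated indices, and Rule~2 is the expansion $\varepsilon_{i_1\cdots i_n}\varepsilon_{j_1\cdots j_n} = \det(\delta_{i_a j_b})$). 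That diagnosis is exactly right, and once one has it the remaining checks --- matching the sign $\sgn(\sigma)$ from uncrossing legs with the permutation acting on the $\chi$--factor in $H_\alpha$, and tracking the $n^{c(\cdot,\cdot)}$ from closed loops --- are, as you say, careful bookkeeping rather than new ideas. Your route buys a self-contained argument within the paper's own framework; the paper's route avoids the sign-tracking entirely by outsourcing it, at the cost of depending on an external reference for the one case where the combinatorics are genuinely more delicate.
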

Proofs of these results are given in the Technical Appendix. 


\section{Implications for Group Equivariant Neural Networks} \label{implicationsgroupequiv}


The fullness of the functors given in Section \ref{fullstrictmonfunct} is especially important. 
This condition immediately implies that, to understand any $G$--equivariant linear map in 
	$\Hom_{G}((\mathbb{R}^{n})^{\otimes k}, (\mathbb{R}^{n})^{\otimes l})$,
it is enough to work with the subset of $(k,l)$--partition diagrams that correspond to $G$, since we can apply the appropriate functor to obtain the equivariant maps themselves.

Furthermore, as the $(k,l)$--partition diagrams have a string-like aspect to them -- because they are morphisms in a strict $\mathbb{R}$--linear monoidal category -- we are able to drag and bend the strings and/or move the vertices to obtain new partition diagrams, and, in the process, new $G$--equivariant linear maps via the appropriate functor!

One very powerful use of this idea can be seen in Figure \ref{symmfactoring}. 
On the left hand side, we have an arbitrary $(5,3)$--partition diagram.
Suppose that we wish to multiply a vector $v \in
(\mathbb{R}^{n})^{\otimes 5}$, expressed in the standard basis,
by the matrix that the diagram corresponds to under the functor $\Theta$ given in Theorem \ref{partfunctor}.
We assume here that $n \geq 4$, since the number of blocks in the set partition corresponding to the $(5,3)$--partition diagram is $4$.
One option would be to multiply the vector by the matrix as given.
However, we can vastly improve the speed of the computation by performing a number of deformations to the diagram as shown in the figure.

At each stage, 
we drag and bend the strings representing the connected components of the set partition to obtain a factoring of the original $(5,3)$--partition diagram in terms of a composition of three other diagrams: 
a $(5,5)$--partition diagram that is not only Brauer but also a diagram that represents a permutation in the symmetric group $S_5$; 
another $(5,3)$--partition diagram that is \textit{planar} -- that is, none of the connected components in the diagram intersect each other --
and, finally, a diagram representing another permutation, this time in the symmetric group $S_3$.

Since the middle diagram is planar, this means that it can be decomposed as a tensor product of a number of simpler diagrams, using the strict monoidal property of $\mathcal{P}(n)$. 
The tensor product decomposition is shown in Figure \ref{tensorproddecomp}.
The key point is that, under the functor $\Theta$, the image of the planar partition diagram will be the Kronecker product of the images of these simpler diagrams, since the functor is strict $\mathbb{R}$--linear monoidal.
This will make the multiplication of any vector by this matrix
significantly quicker to perform.

Hence, we have outlined a procedure which shows that, to multiply $v$
by the linear map that is the image of the left hand diagram under $\Theta$ \textit{quickly},
we can first apply a permutation to the indices of the standard basis elements appearing in the input vector $v$, then apply the Kronecker product matrix that is the image under $\Theta$ of the planar $(5,3)$--partition diagram, and then finally apply another permutation to the indices of the standard basis elements in $(\mathbb{R}^{n})^{\otimes 3}$ appearing in the resulting vector.

By generalising this example to any $(k,l)$--partition diagram, we will recover
-- with one key distinction --
the algorithm of \cite{godfrey}
for applying symmetric group equivariant layer functions on tensor power spaces of $\mathbb{R}^{n}$ to input vectors; however, we have used a very different approach to obtain it.
The key distinction between the two versions
comes from making 
the middle diagram in the composition planar.
Moreover, it is not hard to see that this idea will generalise to give 
an algorithm for applying group equivariant linear maps to input vectors 
for the other groups presented in this paper.
This result will appear in another paper by the same authors.

\begin{figure}
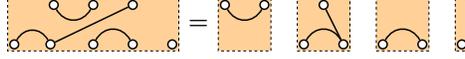

	\centering
	\scalebox{0.35}{\tikzfig{tensorproddecomp}}
	\caption{The decomposition of the planar $(5,3)$--partition diagram into a tensor product of simpler partition diagrams.}
	\label{tensorproddecomp}
\end{figure}

\section{Related Work}

\cite{maron2018} studied the classification of linear permutation equivariant and invariant neural network layers. 
They characterised the learnable, linear, permutation equivariant layer functions in 
$\Hom_{S_n}((\mathbb{R}^{n})^{\otimes k}, (\mathbb{R}^{n})^{\otimes l})$
for $n \geq k + l$, using the orbit basis. 
However, \cite{godfrey} discovered that using the diagram basis, first constructed by \cite{Jones} in the case $k = l$, is beneficial for permutation equivariant neural network computations. 
\cite{pearcecrump} established the connection between permutation equivariant linear layers and the partition algebra, using Schur–Weyl duality. They fully characterised these layer functions for all values of $n$ and tensor power space orders, revealing that the dimension of the layer function space is not independent of $n$. 
The partition algebra was introduced by \cite{Martin0, Martin1, Martin2} and expanded upon by \cite{Jones}.
Recent papers by \cite{BenHal1, BenHal2} and~\cite{BHH}
show how the partition algebra can be used to construct the invariant theory of the symmetric group. 
\cite{comes} discovered the partition category and expressed it in terms of generators and relations.
\cite{pearcecrumpB} characterised all learnable, linear, equivariant layer functions in 
	$\Hom_{G}((\mathbb{R}^{n})^{\otimes k}, (\mathbb{R}^{n})^{\otimes l})$
for $G = O(n)$, $Sp(n)$, and $SO(n)$, using various sets of set partition diagrams. 
This characterisation was adapted from the combinatorial representation theory of the Brauer algebra, first developed by \cite{Brauer}.
\cite{grood} studied the representation theory of the Brauer–Grood algebra, while the Brauer category first appeared in \cite{LehrerZhang}.
The same authors investigated the theory behind what we have termed the Brauer–Grood category in \cite{LehrerZhang2}.

\section{Conclusion}

In this paper, we showed how category theory can be applied to the linear layer functions of group equivariant neural networks for the groups $S_n$, $O(n)$, $Sp(n)$, and $SO(n)$, resulting in a richer structure 
and a deeper understanding of these layer functions. 
In particular, we outlined the development of an algorithm for computing the result of a vector that is passed through an equivariant, linear layer for each group. 
The success of our approach suggests that category theory could be beneficial
for other areas of deep learning, leading to new insights and approaches.

\section{Acknowledgments}
The author would like to thank his PhD supervisor Professor William J. Knottenbelt for being generous with his time throughout the author's period of research prior to the publication of this paper.

This work was funded by the Doctoral Scholarship for Applied Research which was awarded to the author under Imperial College London's Department of Computing Applied Research scheme.
This work will form part of the author's PhD thesis at Imperial College London.
 


\nocite{*}
\bibliography{references}
\bibliographystyle{icml2023}


\newpage

\begin{appendix}
	
	\section{Composition and Tensor Product Operations for $D_k^l(n)$}

	Recall that $D_k^l(n)$ is defined to be the $\mathbb{R}$--linear span of the set of all $(k,l)$--Brauer diagrams together with the set of all $(l+k) \backslash n$--diagrams.

	We wish to define two $\mathbb{R}$-bilinear operations 
	\begin{align}
		\text{composition:   } & \bullet: D_l^m(n) \times D_k^l(n) \rightarrow D_k^m(n) \label{brauergroodcomposition} \\
		\text{tensor product:   } & \otimes: D_k^l(n) \times D_q^m(n) \rightarrow D_{k+q}^{l+m}(n) \label{brauergroodtensorprod}
\end{align}
	It is clear that we need to define these operations on each possible pair of diagrams, hence there are four cases (with the number of vertices to be amended depending on the operation):	
	\begin{enumerate}
		\item Brauer and Brauer
		\item Brauer and $(l+k) \backslash n$
		\item $(l+k) \backslash n$ and Brauer, and
		\item $(l+k) \backslash n$ and $(l+k) \backslash n$
	\end{enumerate}
	To do this, we follow the constructions given in \cite{comes}. 
	\cite[Section 3.1]{comes}, in particular, contains excellent motivation as to why we do the following.

	Firstly, we introduce a \textit{jellyfish}, which takes the $n$ free vertices in a $(l+k) \backslash n$ diagram
	and attaches them to a blue head, in top--to--bottom, left--to-right order.
	For example, if $k = 2$, $l = 1$ and $n = 3$,
	the jellyfish has the form
	\begin{equation} \label{attachjellyfish}
		\begin{aligned}
		\scalebox{0.5}{\tikzfig{jellyfishex}}
		\end{aligned}	
	\end{equation}
	The jellyfish satisfies two rules.

	\textbf{Rule 1}: composing an adjacent permutation with a jellyfish, which is equivalent to having two adjacent legs of the jellyfish crossed, is equal to the negative of the uncrossed jellyfish:
	\begin{center}
		\scalebox{0.5}{\tikzfig{jellyfishrule1}}
	\end{center}
	\textbf{Rule 2}: the juxtaposition of two jellyfish, each with $n$ legs, is equal to the following linear combination of Brauer diagrams.
	\begin{center}
		\scalebox{0.5}{\tikzfig{jellyfishrule2}}
	\end{center}
	For example, if $n = 2$, then we have that
	\begin{center}
		\scalebox{0.5}{\tikzfig{jellyfishexample1}}
	\end{center}
	One consequence of Rule 1 that we need is that if two legs of the same jellyfish are connected, then it is equal to $0$. For example
	\begin{center}
		\scalebox{0.5}{\tikzfig{jellyfishexample2}}
	\end{center}
	With this, we can define the composition and tensor product operations for each of the cases.

	\subsection{Composition}

	\textbf{Case 1}: The composition of a $(k,l)$--Brauer diagram with a $(l,m)$--Brauer diagram is the usual composition as defined in (\ref{composition}).

	\textbf{Cases 2, 3}: We explain this in the situation where we have a $(k,l)$--Brauer diagram $d_\beta$ and a $(m+l) \backslash n$--diagram $d_\alpha$, but the same steps apply in the other case.
	We perform the following procedure:
	\begin{itemize}
		\item Attach a jellyfish to $d_\alpha$ as in (\ref{attachjellyfish}).
		\item Place $d_\beta$ below $d_\alpha$, and concatenate the diagrams as per usual to obtain a diagram with three rows of vertices.
		\item If any two legs of the same jellyfish are connected in this diagram, the result is $0$, and we can stop.
		\item Otherwise, fully concatenate the diagrams to obtain a diagram with two rows of vertices, removing all connected components that lie entirely in the middle row of the concatenated diagrams.
		Let $c(d_\alpha, d_\beta)$ be the number of connected components that are removed from the middle row.
		\item Now repeatedly apply Rule 1 to fully uncross the legs of the jellyfish.  
			This action is actually a permutation $\sigma \in S_n$, and the resulting sign in front of the diagram that we obtain is $\sgn(\sigma)$.
		\item Finally, detach/remove the jellyfish to obtain a $(m+k) \backslash n$--diagram $d_\gamma$.
			Consequently, the result that we obtain is $\sgn(\sigma)n^{c(d_\alpha, d_\beta)}d_\gamma$.
	\end{itemize}

	\textbf{Case 4}: Suppose that we have a 
	$(l+k) \backslash n$--diagram $d_{\alpha_1}$ and a
	$(m+l) \backslash n$--diagram $d_{\alpha_2}$.
	We perform the following procedure:
	\begin{itemize}
		\item Attach separate jellyfish to $d_{\alpha_1}$ and $d_{\alpha_2}$ as in (\ref{attachjellyfish}).
		\item Place $d_{\alpha_1}$ below $d_{\alpha_2}$, and concatenate the diagrams as per usual to obtain a diagram with three rows of vertices.
		\item If any two legs of the same jellyfish are connected in this diagram, the result is $0$, and we can stop.
		\item Otherwise, fully concatenate the diagrams to obtain a diagram with two rows of vertices, removing all connected components that lie entirely in the middle row of the concatenated diagrams.
			Let $c(d_{\alpha_2}, d_{\alpha_1})$ be the number of connected components that are removed from the middle row.
		\item Now repeatedly apply rule 1 to fully uncross the legs of each of the jellyfish. 
			This action is actually a product of permutations $\sigma\tau \in S_n$, and the resulting sign of the diagram that we obtain is $\sgn(\sigma\tau) = \sgn(\sigma)\sgn(\tau)$.
		\item Now apply Rule 2 to obtain a linear combination of $(k,m)$--Brauer diagrams. Call this linear combination $d$.
			Consequently, the result that we obtain is $\sgn(\sigma)\sgn(\tau)n^{c(d_{\alpha_2}, d_{\alpha_1})}d$.
	\end{itemize}

	\subsection{Tensor Product}

	\textbf{Case 1}: The tensor product of a $(k,l)$--Brauer diagram with a $(q,m)$--Brauer diagram is the usual tensor product as defined in (\ref{tensorprod}).
	
	\textbf{Case 2}: 
	The tensor product of a $(k,l)$--Brauer diagram $d_\beta$ with an $(m+q) \backslash n$--diagram $d_\alpha$ is defined to be the $(l+m+k+q) \backslash n$--diagram that is obtained by horizontally placing $d_\beta$ to the left of $d_\alpha$ without any overlapping of vertices.

	\textbf{Case 3}:
	The tensor product of a 
	$(l+k) \backslash n$--diagram $d_\alpha$ 
	with a
	$(q,m)$--Brauer diagram $d_\beta$ 
	is defined to be the $(l+m+k+q) \backslash n$--diagram that is obtained by horizontally placing $d_\alpha$ to the left of $d_\beta$ without any overlapping of vertices.

	\textbf{Case 4}: Suppose that we have a 
	$(l+k) \backslash n$--diagram $d_{\alpha_1}$ and a
	$(m+q) \backslash n$--diagram $d_{\alpha_2}$.
	We perform the following procedure:
	\begin{itemize}
		\item Attach separate jellyfish to $d_{\alpha_1}$ and $d_{\alpha_2}$ as in (\ref{attachjellyfish}).
		\item Horizontally place $d_{\alpha_1}$ to the left of $d_{\alpha_2}$ without any overlapping of vertices.
		\item Apply Rule 2 to obtain a linear combination of $(k+q, l+m)$--Brauer diagrams. This is the end result.
	\end{itemize}

	\subsection{Examples}

	\textbf{Composition, Case 3}	

	Let $d_\alpha$ be the diagram
	\begin{center}
		\scalebox{0.5}{\tikzfig{compositioncase3a}}
	\end{center}
	and let $d_\beta$ be the diagram 
	\begin{center}
		\scalebox{0.5}{\tikzfig{compositioncase3b}}
	\end{center}
	Then $d_\alpha \bullet d_\beta$ is equal to
	\begin{center}
		\scalebox{0.5}{\tikzfig{compositioncase3c}}
	\end{center}

	\textbf{Composition, Case 4}	

	Let $d_{\alpha_1}$ and $d_{\alpha_2}$ be the diagrams
	\begin{center}
		\scalebox{0.5}{\tikzfig{compositioncase4a}}
	\end{center}
	respectively.
	Then $d_{\alpha_2} \bullet d_{\alpha_1}$ is equal to
	\begin{center}
		\scalebox{0.5}{\tikzfig{compositioncase4b}}
	\end{center}

	\textbf{Tensor Product, Case 4}	

	Let $d_{\alpha_1}$ and $d_{\alpha_2}$ be as above.
	Then $d_{\alpha_1} \otimes d_{\alpha_2}$ is equal to
	\begin{center}
		\scalebox{0.5}{\tikzfig{tensorprodcase4a}}
	\end{center}

	\section{Matrix Definitions
		for 
	$\Hom_{G}((\mathbb{R}^{n})^{\otimes k}, (\mathbb{R}^{n})^{\otimes l})$
	}

	In Section \ref{groupequivlinlayers},
	we said that there exists a spanning set or a basis of
	$\Hom_{G}((\mathbb{R}^{n})^{\otimes k}, (\mathbb{R}^{n})^{\otimes l})$ for each of the groups in question, expressed in the basis of matrix units of 
$\Hom((\mathbb{R}^{n})^{\otimes k}, (\mathbb{R}^{n})^{\otimes l})$, without stating what the matrices actually are
in the main part of the paper. We state what these matrices are below.
	These definitions can also be found in \cite[Theorem 5.4]{godfrey} for $S_n$, and in \cite[Theorems 6.5, 6.6, 6.7]{pearcecrumpB} for $O(n), Sp(n)$ and $SO(n)$.

	Recall that, for any $k,l \in \mathbb{Z}_{\geq 0}$, 
as a result of picking the standard/symplectic basis for $\mathbb{R}^{n}$,
the vector space
$\Hom((\mathbb{R}^{n})^{\otimes k}, (\mathbb{R}^{n})^{\otimes l})$
has a standard basis of matrix units
\begin{equation} \label{standardbasisunits}
	\{E_{I,J}\}_{I \in [n]^l, J \in [n]^k}
\end{equation}
where $I$ is a tuple $(i_1, i_2, \dots, i_l) \in [n]^l$, 
$J$ is a tuple $(j_1, j_2, \dots, j_k) \in [n]^k$
and $E_{I,J}$ has a $1$ in the $(I,J)$ position and is $0$ elsewhere.
If one or both of $k$, $l$ is equal to $0$, then we replace the tuple that indexes the matrix by a $1$.
For example, when $k = 0$ and $l \in \mathbb{Z}_{\geq 1}$, (\ref{standardbasisunits}) becomes $\{E_{I,1}\}_{I \in [n]^l}$.

	\begin{defn}[$E_\pi$ given in Theorems \ref{diagbasisSn} and \ref{spanningsetO(n)}] \label{Sndiagdefn}

		Suppose that $d_\pi$ is a $(k,l)$--partition diagram.
		Then $E_\pi$ is defined as follows.

		Associate the indices $i_1, i_2, \dots, i_l$ with the vertices in the top row of $d_\pi$, and $j_1, j_2, \dots, j_k$ with the vertices in the bottom row of $d_\pi$.
	Then, if $S_\pi((I,J))$ is defined to be the set
	\begin{equation} \label{Snindexingset}
		\{(I,J) \in [n]^{l+k} \mid \text{if } x,y \text{ are in the same block of } \pi, \text{then } i_x = i_y \}
	\end{equation}
	(where we have momentarily replaced the elements of $J$ by $i_{l+m} \coloneqq j_m$ for all $m \in [k]$),
	 we have that
	\begin{equation} \label{mappeddiagbasisSn}
		E_\pi
		\coloneqq
		\sum_{I \in [n]^l, J \in [n]^k}
		\delta_{\pi, (I,J)}
		E_{I,J}
	\end{equation}
	where
	\begin{equation}
		\delta_{\pi, (I,J)}
		\coloneqq
		\begin{cases}
			1 & \text{if } (I,J) \in S_\pi((I,J)) \\
			0 & \text{otherwise}
		\end{cases}
	\end{equation}
	\end{defn}

	We give an example in Figure \ref{matrix1,4}.

\begin{figure}[tb]
	\begin{center}
\begin{tblr}{
  colspec = {X[c,h]X[c]X[c]},
  stretch = 0,
  rowsep = 8pt,
  hlines = {1pt},
  vlines = {1pt},
}
	{Partition Diagram \\ $d_\pi$} 	& {$S_\pi((I,J))$}
	& 
	{Matrix Element \\ with $n = 4$} 	\\
	\scalebox{0.6}{\tikzfig{orbit11sq1}} & 
				$\{(i,i) \in [n]^2\}$
				&
	\scalebox{0.75}{
	$
	\NiceMatrixOptions{code-for-first-row = \scriptstyle \color{blue},
                   	   code-for-first-col = \scriptstyle \color{blue}
	}
	\begin{bNiceArray}{*{4}{c}}[first-row,first-col]
				& 1 	& 2	& 3	& 4 	\\
		1		& 1	& 0	& 0	& 0	\\
		2		& 0	& 1	& 0	& 0	\\
		3		& 0	& 0	& 1	& 0	\\
		4		& 0	& 0	& 0	& 1
	\end{bNiceArray}
	$}
	\\
	\scalebox{0.6}{\tikzfig{orbit11sq2}}	
	&
				$\{(i,j) \in [n]^2\}$
	& 
	\scalebox{0.75}{
	$
	\NiceMatrixOptions{code-for-first-row = \scriptstyle \color{blue},
                   	   code-for-first-col = \scriptstyle \color{blue}
	}
	\begin{bNiceArray}{*{4}{c}}[first-row,first-col]
				& 1 	& 2	& 3	& 4 	\\
		1		& 1	& 1	& 1	& 1	\\
		2		& 1	& 1	& 1	& 1	\\
		3		& 1	& 1	& 1	& 1	\\
		4		& 1	& 1	& 1	& 1
	\end{bNiceArray}
	$}
	\\
\end{tblr}
		\caption{A table showing how $(1,1)$--partition diagrams correspond to matrices in 
		$\Hom_{S_4}(\mathbb{R}^{4}, \mathbb{R}^{4})$.}
  	\label{matrix1,4}
	\end{center}
\end{figure}

	\begin{defn}[$F_\beta$ given in Theorem \ref{spanningsetSp(n)}]
		Suppose that $d_\beta$ is a $(k,l)$--Brauer diagram.
		Then $F_\beta$ is defined as follows.
	
		Associate the indices $i_1, i_2, \dots, i_l$ with the vertices in the top row of $d_\beta$, and $j_1, j_2, \dots, j_k$ with the vertices in the bottom row of $d_\beta$.
	Then, we have that
	\begin{equation} \label{matrixSp(n)}
		F_\beta 
		\coloneqq
		\sum_{I, J} 
		\gamma_{r_1, u_1}
		\gamma_{r_2, u_2}
		\dots
		\gamma_{r_{\frac{l+k}{2}}, u_{\frac{l+k}{2}}}
		E_{I,J}
	\end{equation}
	where the indices $i_p, j_p$ range over $1, 1', \dots, m, m'$,
	where $r_1, u_1, \dots, r_{\frac{l+k}{2}}, u_{\frac{l+k}{2}}$ is any permutation of the indices $i_1, i_2, \dots, i_l, j_1, j_2, \dots, j_k$ such that the vertices corresponding to
	$r_p, u_p$ 
	are in the same block of $\beta$, and
	\begin{equation} \label{gammarpup}
		\gamma_{r_p, u_p} \coloneqq
		\begin{cases}
			\delta_{r_p, u_p} & \text{if the vertices corresponding to } r_p, u_p \text{ are in different rows of } d_\beta \\
			\epsilon_{r_p, u_p} & \text{if the vertices corresponding to } r_p, u_p \text{ are in the same row of } d_\beta
    		\end{cases}
	\end{equation}
	Here, $\epsilon_{r_p, u_p}$ is given by
\begin{equation} \label{epsilondef1}
	\epsilon_{\alpha, \beta} = \epsilon_{{\alpha'}, {\beta'}} = 0
\end{equation}
\begin{equation} \label{epsilondef2}
	\epsilon_{\alpha, {\beta'}} = - \epsilon_{{\alpha'}, {\beta}} = \delta_{\alpha, \beta}
\end{equation}
	\end{defn}

	We give an example in Figure \ref{matrix2,2}.

	\begin{defn}[$H_\alpha$ given in Theorem \ref{spanningsetSO(n)}]
		Suppose that $d_\alpha$ is a $(l+k)\backslash n$--diagram.
		Then $H_\alpha$ is defined as follows.

	Associate 
the indices $i_1, i_2, \dots, i_l$ with the vertices in the top row of $d_\alpha$, and $j_1, j_2, \dots, j_k$ with the vertices in the bottom row of $d_\alpha$.
	Suppose that there are $s$ free vertices in the top row. Then there are $n-s$ free vertices in the bottom row.
	Relabel the $s$ free indices in the top row (from left to right) by 
	$t_1, \dots, t_s$, and the $n-s$ free indices in the bottom row (from left to right) by $b_1, \dots, b_{n-s}$. 

	Then, define
		$
		\chi
			\left(\begin{smallmatrix} 
				1 & 2 & \cdots & s & s+1 & \cdots & n\\
				t_1 & t_2 & \cdots & t_s & b_1 & \cdots & b_{n-s}
			\end{smallmatrix}\right)
		$
		as follows: it is 
		$0$ if the elements $t_1, \dots, t_s, b_1, \dots, b_{n-s}$ are not distinct, otherwise, it is
		$	
		\sgn
			\left(\begin{smallmatrix} 
				1 & 2 & \cdots & s & s+1 & \cdots & n\\
				t_1 & t_2 & \cdots & t_s & b_1 & \cdots & b_{n-s}
			\end{smallmatrix}\right)
		$,
		considered as a permutation of $[n]$.

	As a result, for any $n \in \mathbb{Z}_{\geq 1}$, we have that
	\begin{equation} \label{SO(n)Halpha}
		H_\alpha
		\coloneqq
		\sum_{I \in [n]^l, J \in [n]^k} 
		\chi
			\left(\begin{smallmatrix} 
				1 & 2 & \cdots & s & s+1 & \cdots & n\\
				t_1 & t_2 & \cdots & t_s & b_1 & \cdots & b_{n-s}
			\end{smallmatrix}\right)
		\delta_{r_1, u_1}
		\delta_{r_2, u_2}
		\dots
		\delta_{r_{\frac{l+k-n}{2}}, u_{\frac{l+k-n}{2}}}
		E_{I,J}
	\end{equation}
	Here, $r_1, u_1, \dots, r_{\frac{l+k-n}{2}}, u_{\frac{l+k-n}{2}}$
	is any permutation of the indices 
	\begin{equation}	
	\{i_1, \dots, i_l, j_1, \dots, j_k\} \backslash \{t_1, \dots, t_s, b_1, \dots, b_{n-s}\}
	\end{equation}
	such that the vertices corresponding to $r_p, u_p$ are in the same block of $\alpha$. 
	\end{defn}

	We give an example in Figure \ref{so3matrix2,2}.

\begin{figure}[t]
	\begin{center}
\begin{tblr}{
  colspec = {X[c,h]X[c]X[c]},
  stretch = 0,
  rowsep = 6pt,
  hlines = {1pt},
  vlines = {1pt},
}
	{Brauer Diagram \\ $d_\beta$} 	& {Matrix Entries}	& 
	{Matrix Element \\ with $n = 2$}\\
	\scalebox{0.6}{\tikzfig{brauer22sq1}} & $(\epsilon_{i_1, i_2}\epsilon_{j_1,j_2})$
	& 
	\scalebox{0.75}{
	$
	\NiceMatrixOptions{code-for-first-row = \scriptstyle \color{blue},
                   	   code-for-first-col = \scriptstyle \color{blue}
	}
	\begin{bNiceArray}{*{2}{c}*{2}{c}}[first-row,first-col]
				& 1,1 	& 1,1'	& 1,1'	& 1',1'	\\
		1,1		& 0	& 0	& 0	& 0	\\
		1,1'		& 0	& 1	& -1	& 0	\\
		1',1		& 0	& -1	& 1	& 0	\\
		1',1'		& 0	& 0	& 0	& 0
	\end{bNiceArray}
	$}
	\\
	\scalebox{0.6}{\tikzfig{brauer22sq2}}	& 
	$(\delta_{i_1, j_1}\delta_{i_2,j_2})$
	& 
	\scalebox{0.75}{
	$
	\NiceMatrixOptions{code-for-first-row = \scriptstyle \color{blue},
                   	   code-for-first-col = \scriptstyle \color{blue}
	}
	\begin{bNiceArray}{*{2}{c}*{2}{c}}[first-row,first-col]
				& 1,1 	& 1,1'	& 1,1'	& 1',1'	\\
		1,1		& 1	& 0	& 0	& 0	\\
		1,1'		& 0	& 1	& 0	& 0	\\
		1',1		& 0	& 0	& 1	& 0	\\
		1',1'		& 0	& 0	& 0	& 1
	\end{bNiceArray}
	$}
	\\
	\scalebox{0.6}{\tikzfig{brauer22sq3}}	& 
	$(\delta_{i_1, j_2}\delta_{i_2,j_1})$
	& 
	\scalebox{0.75}{
	$
	\NiceMatrixOptions{code-for-first-row = \scriptstyle \color{blue},
                   	   code-for-first-col = \scriptstyle \color{blue}
	}
	\begin{bNiceArray}{*{2}{c}*{2}{c}}[first-row,first-col]
				& 1,1 	& 1,1'	& 1,1'	& 1',1'	\\
		1,1		& 1	& 0	& 0	& 0	\\
		1,1'		& 0	& 0	& 1	& 0	\\
		1',1		& 0	& 1	& 0	& 0	\\
		1',1'		& 0	& 0	& 0	& 1
	\end{bNiceArray}
	$}
	\\
\end{tblr}
		\caption{A table showing how $(2,2)$--Brauer diagrams correspond to matrices in 
		$\Hom_{Sp(2)}((\mathbb{R}^{2})^{\otimes 2}, (\mathbb{R}^{2})^{\otimes 2})$.}
  	\label{matrix2,2}
	\end{center}
\end{figure}

\begin{figure}[ht]
	\begin{center}
\begin{tblr}{
  colspec = {X[c,h]X[c]X[c]},
  stretch = 0,
  rowsep = 6pt,
  hlines = {1pt},
  vlines = {1pt},
}
	{Diagram \\ $d_\alpha$} 	& {Matrix Entries}	& 
	{Matrix Element \\ with $n = 2$}\\
	\scalebox{0.6}{\tikzfig{grood22sq1}} & 
	$(
	\chi
			\left(\begin{smallmatrix} 
				1 & 2 \\
				j_1 & j_2 
			\end{smallmatrix}\right)
	\delta_{i_1,i_2})$
	& 
	\scalebox{0.75}{
	$
	\NiceMatrixOptions{code-for-first-row = \scriptstyle \color{blue},
                   	   code-for-first-col = \scriptstyle \color{blue}
	}
	\begin{bNiceArray}{*{2}{c}*{2}{c}}[first-row,first-col]
				& 1,1 	& 1,2	& 2,1	& 2,2	\\
		1,1		& 0	& 1	& -1	& 0	\\
		1,2		& 0	& 0	& 0	& 0	\\
		2,1		& 0	& 0	& 0	& 0	\\
		2,2		& 0	& 1	& -1	& 0
	\end{bNiceArray}
	$}
	\\
	\scalebox{0.6}{\tikzfig{grood22sq2}}	& 
	$(
	\chi
			\left(\begin{smallmatrix} 
				1 & 2 \\
				i_2 & j_2 
			\end{smallmatrix}\right)
	\delta_{i_1,j_1})$
	& 
	\scalebox{0.75}{
	$
	\NiceMatrixOptions{code-for-first-row = \scriptstyle \color{blue},
                   	   code-for-first-col = \scriptstyle \color{blue}
	}
	\begin{bNiceArray}{*{2}{c}*{2}{c}}[first-row,first-col]
				& 1,1 	& 1,2	& 2,1	& 2,2	\\
		1,1		& 0	& 1	& 0	& 0	\\
		1,2		& -1	& 0	& 0	& 0	\\
		2,1		& 0	& 0	& 0	& 1	\\
		2,2		& 0	& 0	& -1	& 0
	\end{bNiceArray}
	$}
	\\
	\scalebox{0.6}{\tikzfig{grood22sq3}}	& 
	$(
	\chi
			\left(\begin{smallmatrix} 
				1 & 2 \\
				i_2 & j_1 
			\end{smallmatrix}\right)
	\delta_{i_1,j_2})$
	& 
	\scalebox{0.75}{
	$
	\NiceMatrixOptions{code-for-first-row = \scriptstyle \color{blue},
                   	   code-for-first-col = \scriptstyle \color{blue}
	}
	\begin{bNiceArray}{*{2}{c}*{2}{c}}[first-row,first-col]
				& 1,1 	& 1,2	& 2,1	& 2,2	\\
		1,1		& 0	& 0	& 1	& 0	\\
		1,2		& -1	& 0	& 0	& 0	\\
		2,1		& 0	& 0	& 0	& 1	\\
		2,2		& 0	& -1	& 0	& 0
	\end{bNiceArray}
	$}
	\\
	\scalebox{0.6}{\tikzfig{grood22sq4}}	& 
	$(
	\chi
			\left(\begin{smallmatrix} 
				1 & 2 \\
				i_1 & j_2 
			\end{smallmatrix}\right)
	\delta_{i_2,j_1})$
	& 
	\scalebox{0.75}{
	$
	\NiceMatrixOptions{code-for-first-row = \scriptstyle \color{blue},
                   	   code-for-first-col = \scriptstyle \color{blue}
	}
	\begin{bNiceArray}{*{2}{c}*{2}{c}}[first-row,first-col]
				& 1,1 	& 1,2	& 2,1	& 2,2	\\
		1,1		& 0	& 1	& 0	& 0	\\
		1,2		& 0	& 0	& 0	& 1	\\
		2,1		& -1	& 0	& 0	& 0	\\
		2,2		& 0	& 0	& -1	& 0
	\end{bNiceArray}
	$}
	\\
	\scalebox{0.6}{\tikzfig{grood22sq5}}	& 
	$(
	\chi
			\left(\begin{smallmatrix} 
				1 & 2 \\
				i_1 & j_1 
			\end{smallmatrix}\right)
	\delta_{i_2,j_2})$
	& 
	\scalebox{0.75}{
	$
	\NiceMatrixOptions{code-for-first-row = \scriptstyle \color{blue},
                   	   code-for-first-col = \scriptstyle \color{blue}
	}
	\begin{bNiceArray}{*{2}{c}*{2}{c}}[first-row,first-col]
				& 1,1 	& 1,2	& 2,1	& 2,2	\\
		1,1		& 0	& 0	& 1	& 0	\\
		1,2		& 0	& 0	& 0	& 1	\\
		2,1		& -1	& 0	& 0	& 0	\\
		2,2		& 0	& -1	& 0	& 0
	\end{bNiceArray}
	$}
	\\
	\scalebox{0.6}{\tikzfig{grood22sq6}}	& 
	$(
	\chi
			\left(\begin{smallmatrix} 
				1 & 2 \\
				i_1 & i_2 
			\end{smallmatrix}\right)
	\delta_{j_1,j_2})$
	& 
	\scalebox{0.75}{
	$
	\NiceMatrixOptions{code-for-first-row = \scriptstyle \color{blue},
                   	   code-for-first-col = \scriptstyle \color{blue}
	}
	\begin{bNiceArray}{*{2}{c}*{2}{c}}[first-row,first-col]
				& 1,1 	& 1,2	& 2,1	& 2,2	\\
		1,1		& 0	& 0	& 0	& 0	\\
		1,2		& 1	& 0	& 0	& 1	\\
		2,1		& -1	& 0	& 0	& -1	\\
		2,2		& 0	& 0	& 0	& 0
	\end{bNiceArray}
	$}
	\\
\end{tblr}
		\caption{A table showing how $(2+2) \backslash 2$--diagrams correspond to matrices in 
		$\Hom_{SO(2)}((\mathbb{R}^{2})^{\otimes 2}, (\mathbb{R}^{2})^{\otimes 2})$.}
	\label{so3matrix2,2}
	\end{center}
\end{figure}

	\section{Basics of Category Theory}

	We provide some of the basics on Category Theory to help the reader to understand some of the language that is used in the main text.
Other good references are \cite{maclane, kock, turaev}.

	\subsection{Categories}

	\begin{defn}
		A category $\mathcal{C}$
		consists of 
		\begin{itemize}
			\item a collection of \textit{objects} $Ob(\mathcal{C})$,
			\item for every $X, Y \in Ob(\mathcal{C})$, a collection $\Hom_{\mathcal{C}}(X,Y)$ of \textit{morphisms} from $X$ to $Y$
			\item for every $X, Y, Z \in Ob(\mathcal{C})$, an associative composition rule
				\begin{equation}
					\circ :
					\Hom_{\mathcal{C}}(Y,Z)
					\times
					\Hom_{\mathcal{C}}(X,Y)
					\rightarrow
					\Hom_{\mathcal{C}}(X,Z)
				\end{equation}
			\item and, for every $X \in Ob(\mathcal{C})$, there is an identity morphism $1_X \in \Hom_{\mathcal{C}}(X,X)$, such that every morphism $f \in \Hom_{\mathcal{C}}(Y,X)$
				satisfies $1_X \circ f = f$, and every morphism $g \in \Hom_{\mathcal{C}}(X,Z)$
				satisfies $g \circ 1_X = g$.
		\end{itemize}
	\end{defn}

	\begin{remark}
		We write $f: X \rightarrow Y$ for a morphism in $\Hom_{\mathcal{C}}(X,Y)$.
	\end{remark}

	\begin{defn}
		A category is said to be \textit{locally small} if the collection of morphisms between any two objects in the category is a set.
	\end{defn}

	\begin{example}
		We list some examples of categories below.
		\begin{itemize}
			\item The category \textbf{Set}, whose objects are all sets, and whose morphisms are the set mappings.
			\item The category \textbf{Vect$_{\mathbb{R}}$}, whose objects are all vector spaces over $\mathbb{R}$, and whose morphisms are the $\mathbb{R}$--linear maps.
			\item The category \textbf{Grp}, whose objects are all groups, and whose morphisms are the group homomorphisms.
		\end{itemize}
	\end{example}

	\subsection{Functors}
	
	Just as there is a notion of a map between sets, there is also a notion of a "map" between categories, called functors.

	\begin{defn}
		Suppose that $\mathcal{C}$ and $\mathcal{D}$ are two categories.

		Then a functor $\mathcal{F}: \mathcal{C} \rightarrow \mathcal{D}$ consists of
		\begin{itemize}
			\item a map $\mathcal{F}: Ob(\mathcal{C}) \rightarrow Ob(\mathcal{D})$, and
			\item for each pair of objects $X, Y \in \mathcal{C}$, a map
				\begin{equation}
					\mathcal{F}_{X,Y}: 
					\Hom_{\mathcal{C}}(X,Y)
					\rightarrow
					\Hom_{\mathcal{D}}(\mathcal{F}(X),\mathcal{F}(Y))
				\end{equation}
				that preserves 1) the composition rule in each category, that is, for morphisms $f: X \rightarrow Y$, $g: Y \rightarrow Z$ in $\mathcal{C}$, we have that
				\begin{equation}
					\mathcal{F}(g \circ_{\mathcal{C}} f)
					=
					\mathcal{F}(g) \circ_{\mathcal{D}} \mathcal{F}(f)
				\end{equation}
				and 2) the identity morphisms, that is, for all $X \in Ob(\mathcal{C})$, we have that
				\begin{equation}
					\mathcal{F}_{X,X}(1_X) = 1_{\mathcal{F}(X)} 
				\end{equation}
		\end{itemize}
	\end{defn}

	\begin{example}
		We list some examples of functors below.
		\begin{itemize}
			\item the functor \textbf{Grp} $\rightarrow$ \textbf{Set}, which to each group $G$ associates the underlying set, and to each group homomorphism, the underlying set map.
			\item the functor \textbf{Set}$ \rightarrow$ \textbf{Vect$_{\mathbb{R}}$}, which to each set associates the vector space spanned by the elements of the set, and, for a given set map, the associated linear map is the one that is given by extending linearly.
		\end{itemize}
	\end{example}

	\section{Monoidal Categories and Monoidal Functor Proofs}

	We first show that $\mathcal{C}(G)$, for the groups $G = S_n, O(n), Sp(n)$ and $SO(n)$,
	is a subcategory of $\Rep(G)$.
	\begin{proof}
		The category of representations of a group $G$, $\Rep(G)$, is defined as follows.
		\begin{defn}
			Let $G$ be a group. Then $\Rep(G)$ is the category
			whose objects are pairs $(V, \rho_V)$,
			where $\rho_V: G \rightarrow GL(V)$ is a representation of $G$,
	and,
	for any pair of objects 
	$(V, \rho_V)$ and $(W, \rho_W)$,
	the morphism space, 
			$\Hom_{Rep(G)}((V, \rho_V), (W, \rho_W))$,
	is precisely the vector space of $G$--equivariant linear maps $V \rightarrow W$,
			$\Hom_{G}(V, W)$.

	The vertical composition of morphisms is given by the usual composition of linear maps, the 
	horizontal composition of morphisms 
	is given by the usual tensor product of linear maps, both of which are associative operations, and the unit object is given by $(\mathbb{R}, 1_\mathbb{R})$, where $1_\mathbb{R}$ is the one-dimensional trivial representation of $G$.
		\end{defn}
		With this definition, it is clear that $\mathcal{C}(G)$ is a subcategory of $\Rep(G)$.
		In fact, it is a full subcategory, as, for any pair of objects in $C(G)$, 
	$((\mathbb{R}^n)^{\otimes k},\rho_k)$ and $((\mathbb{R}^n)^{\otimes l},\rho_l)$,
	the morphism space, 
	$\Hom_{\mathcal{C}(G)}(((\mathbb{R}^n)^{\otimes k},\rho_k), ((\mathbb{R}^n)^{\otimes l},\rho_l))$ 
	is precisely
	$\Hom_{\Rep(G)}(((\mathbb{R}^n)^{\otimes k},\rho_k), ((\mathbb{R}^n)^{\otimes l},\rho_l))$.
	\end{proof}

	We now prove the existence of the full, strict $\mathbb{R}$--linear monoidal functors given in Section \ref{fullstrictmonfunct}.

	\begin{proof}[Proof of Theorem \ref{partfunctor}]

		The functor $\Theta$ is full because the map (\ref{partmorphism}), for all objects $k, l$ in $\mathcal{P}(n)$, which, by definition, is (\ref{partmorphismmap}), is surjective, by Theorem \ref{diagbasisSn}.

		To show that $\Theta$ is strict, $\mathbb{R}$--linear monoidal,
		we need to show that $\Theta$ satisfies the conditions of Definition \ref{monoidalfunctordefn}.
		The picture to have in mind for point 2. below is that the tensor product on set partition diagrams places the diagrams side-by-side, without any overlapping of vertices.
		\begin{enumerate}
			\item Let $k, l$ be any two objects in $\mathcal{P}(n)$. Then 
	\begin{align}
		\Theta(k \otimes l) 
		& = \Theta(k + l) \\
		& = ((\mathbb{R}^{n})^{\otimes k+l}, \rho_{k+l}) \\
		& = ((\mathbb{R}^{n})^{\otimes k} \otimes (\mathbb{R}^{n})^{\otimes l}, \rho_k \otimes \rho_l) \\
		& = \Theta(k) \otimes \Theta(l)
	\end{align}
			\item It is enough to show this on arbitrary basis elements of arbitrary morphism spaces as the morphism spaces are vector spaces.
			Suppose that $f: k \rightarrow l$ and $g: q \rightarrow m$ are two basis elements in $\Hom_{\mathcal{P}(n)}(k,l)$ and $\Hom_{\mathcal{P}(n)}(q,m)$ respectively.
			Then $f = d_\pi$ for some set partition $\pi$ in $\Pi_{l+k}$, and $g = d_\tau$ for some set partition $\tau$ in $\Pi_{m+q}$. 

				As $\Hom_{\mathcal{P}(n)}(k,l) = P_k^l(n)$
				and $\Hom_{\mathcal{P}(n)}(q,m) = P_q^m(n)$, we have, by (\ref{tensorprod}), that $f \otimes g$ is an element of $P_{k+q}^{l+m}(n) = \Hom_{\mathcal{P}(n)}(k+q,l+m)$.
				
				In particular, $f \otimes g = d_{\omega}$ for the set partition $\omega \coloneqq \pi \cup \tau \in \Pi_{l+m+k+q}$. 
				
				By Theorem \ref{diagbasisSn}, we have that $\Theta(f) = E_\pi$, $\Theta(g) = E_\tau$, and $\Theta(f \otimes g) = E_\omega$.
	
				Hence,
	\begin{align} 
		\Theta(f) \otimes \Theta(g)
		& =
		E_\pi \otimes E_\tau \\
		& =
		\left(
		\sum_{I \in [n]^l, J \in [n]^k}
		\delta_{\pi, (I,J)}
		E_{I,J}
		\right)
		\otimes
		\left(
		\sum_{X \in [n]^m, Y \in [n]^q}
		\delta_{\tau, (X,Y)}
		E_{X,Y} 
		\right)
		\\
		& =
		\sum_{(I,X) \in [n]^{l+m}, (J,Y) \in [n]^{k+q}}
		\delta_{\omega, (I,X),(J,Y))}
		E_{(I,X),(J,Y)} \label{deltaprod} \\
		& = E_\omega \\
		& = \Theta(f \otimes g)
	\end{align}
	where (\ref{deltaprod}) holds because $S_\pi((I,J)) \cup S_\tau((X,Y))
					= S_\omega((I,X),(J,Y))$.
				\item It is clear from the statement of the theorem that
					$\Theta$ sends the unit object $0$ in $\mathcal{P}(n)$ to $(\mathbb{R}, 1_\mathbb{R})$, which is the unit object in $\mathcal{C}(S_n)$.
				\item This is immediate because the map (\ref{partmorphismmap}) is $\mathbb{R}$--linear by Theorem \ref{diagbasisSn}. 
		\qedhere
		\end{enumerate} 
	\end{proof}

	\begin{proof}[Proof of Theorem \ref{brauerO(n)functor}]
		The proof is practically identical to that of Theorem \ref{partfunctor}, except we replace $\Theta$ by $\Phi$.
	\end{proof}

	\begin{proof}[Proof of Theorem \ref{brauerSp(n)functor}]
		The proof is very similar to that of Theorem \ref{partfunctor}. 
		For point 2., we need to consider the form of the matrix (\ref{matrixSp(n)}),
		but it is not hard to see how the proof of point 2. for Theorem \ref{partfunctor} can be adapted for this situation, since, 
		if $d_{\beta_1}$ is a $(k,l)$--Brauer diagram, and
		if $d_{\beta_2}$ is a $(q,m)$--Brauer diagram, then,
		defining $\omega \coloneqq \beta_1 \cup \beta_2$, that is,
		$d_\omega$ is a $(k+q,l+m)$--Brauer diagram,
		we have that
		two vertices in $d_{\beta_i}$ are in different rows (the same row) if and only if they are in different rows (the same row) of $d_\omega$.
	\end{proof}
	
	\begin{proof}[Proof of Theorem \ref{brauerSO(n)functor}]
		The proof can be found in \cite[Theorem 6.1]{LehrerZhang2}.
	\end{proof}

\end{appendix}

\end{document}